\documentclass{article}

\PassOptionsToPackage{sort}{natbib}
\usepackage[preprint]{neurips_2026}

\usepackage[utf8]{inputenc} 
\usepackage[T1]{fontenc}  
\usepackage{hyperref} 
  \hypersetup{
  	colorlinks = true,
  	linkcolor = blue,
  	anchorcolor = blue,
  	citecolor = blue,
  	filecolor = blue,
  	urlcolor = blue
  }
\usepackage{url}      
\usepackage{booktabs}    
\usepackage{amsfonts}    
\usepackage{nicefrac}    
\usepackage{microtype}   
\usepackage{xcolor}     

\usepackage{marginnote}
\usepackage{xcolor}
\usepackage{float}
\usepackage[subrefformat=parens]{subcaption}
\usepackage{amssymb}
\usepackage{amstext}
\usepackage{amsmath}
\usepackage{amscd}
\usepackage{amsthm}
\usepackage{amsfonts}
\usepackage{enumerate}
\usepackage{graphicx}
\usepackage{latexsym}
\usepackage{mathrsfs}
\usepackage{mathtools}
\usepackage{cases}
\usepackage{verbatim}
\usepackage{bm}
\usepackage{tikz}
\usepackage{caption}
\usepackage{cleveref}
\usepackage{tikz}
\usetikzlibrary{matrix}
\usepackage{comment}
\usepackage{tikz-cd}
\usepackage{pb-diagram}
\usepackage[colorinlistoftodos]{todonotes}
\usepackage{url}
\crefname{example}{Example}{Examples}

\newcommand{\norm}[1]{\|#1\|}
\newcommand{\abs}[1]{|#1|}
\newcommand{\grad}{\nabla}

\renewcommand{\div}{\operatorname{div}}
\newcommand{\st}{\,:\,}
\newcommand{\R}{\mathbb{R}}
\newcommand{\bv}{\mathrm{BV}}
\newcommand{\RI}{\mathbb R \cup \{+\infty\}}
\DeclareMathOperator{\sign}{sign}

\newtheorem{lemma}{Lemma}
\newtheorem{theorem}{Theorem}
\newtheorem{corollary}{Corollary}
\newtheorem{proposition}{Proposition}
\newtheorem{definition}{Definition}
\newtheorem{example}{Example}

\newtheorem{remark}{Remark}

\title{Approximation of Maximally Monotone Operators :
\\
A Graph Convergence Perspective}

\author{%
 Takashi Furuya \\
 Doshisha University, RIKEN AIP \\
 \texttt{tfuruya@mail.doshisha.ac.jp} \\
 \AND
 Yury Korolev \\
 Department of Mathematical Sciences \\
 University of Bath,  UK \\
 \texttt{ymk30@bath.ac.uk}
 \And
 Takaharu Yaguchi \\
 Kyushu University, RIKEN AIP \\
 \texttt{yaguchi@imi.kyushu-u.ac.jp} \\
}

\begin{document}

\maketitle

\begin{abstract} 

Operator learning has been highly successful for continuous mappings between infinite-dimensional spaces, such as PDE solution operators.
However, many operators of interest—including differential operators—are discontinuous or set-valued, and lie outside classical approximation frameworks.
We propose a paradigm shift by formulating approximation via graph convergence (Painlevé–Kuratowski convergence), which is well-suited for closed operators.
We show that uniform and $L^p$ approximation are fundamentally inadequate in this setting. 
Focusing on maximally monotone operators, we prove that any such operator can be approximated in the sense of local graph convergence by continuous encoder–decoder architectures, and further construct structure-preserving approximations that retain maximal monotonicity via resolvent-based parameterizations.


\end{abstract}

\section{Introduction}
\label{sec:intro}

Operator learning has emerged as a powerful paradigm for approximating mappings between infinite-dimensional spaces, with typical applications in partial differential equations (PDEs) or inverse problems \cite{boulle2024mathematical, kovachki2024operator, lu2019deeponet, nelsen2025operator}. 
Most existing approaches focus on approximating solution operators of PDEs, mapping e.g. boundary conditions or coefficients to corresponding solutions. 
In contrast, a central perspective of this work is to learn the underlying differential operator itself, which defines the governing equation. Such methods constitute another important class of applications of machine learning to science \cite{Both2021-fg, Chen2021-rw, Long2018-rm, Unknown2018-sz, Hou2025-oy}.
This viewpoint is fundamentally different, as differential operators are typically discontinuous (unbounded)
and may even be set-valued \cite{Acary2008-gp, Wang1997-ff}, and thus fall outside classical approximation frameworks in operator learning. 
However, differential operators are naturally characterized as \emph{closed operators}, i.e. those with a closed graph.

In this work we therefore focus on closed operators as targets. 
This class includes not only differential operators, but also many important operators arising in applications.
For instance, inverses of compact operators and subdifferential operators of convex functionals fall into this class. 
Subdifferentials play a central role in variational problems \cite{bauschke2011convex,rockafellar1998variational}, 
while inverse operators are the subject of rich inverse problems and regularization theory \cite{engl1996regularization, scherzer2009variational, benning2018modern}. 

A key difficulty is that classical approximation frameworks in operator learning rely on continuity or $L^p$-type structures, which fail to capture the behavior of closed operators. 
As a result, such operators cannot, in general, be approximated under local uniform or $L^p$ convergence. 
This raises a fundamental question:
\[
\textbf{How should one formulate approximation when the target operator is closed?}
\]

To address the lack of continuity, we advocate changing the topology of approximation. 
A natural candidate is \emph{graph convergence} (also known as Painlev\'e--Kuratowski convergence \cite[Sec. 2.2]{adly2022preservation}), 
which is well-established in functional and variational analysis \cite{rockafellar1998variational}.

The key advantage of graph convergence is that it ensures stability of the learned input--output relation under limiting operations, in a manner consistent with the notion of closed operators. 
In particular, any accumulation point of approximate input--output pairs remains valid for the limiting operator. 
This property is essential in applications where solutions are defined through limits, such as PDEs.
Moreover, graph convergence naturally accommodates set-valued operators and situations where the domains of the approximating operators and the target operator differ. 
In contrast, classical notions such as uniform or $L^p$ convergence are not well-defined in these settings. 
These features are particularly important in ill-posed inverse problems.

However, the class of closed operators is too broad to admit a meaningful universal approximation theory under graph convergence 
and we provide simple examples of closed operators that cannot be approximated by continuous operator models, not only in the uniform or $L^2$ sense, but also in the sense of graph convergence (\cref{ex:uniform-conv,ex:Lp-conv,ex:graph-conv}). 
This reveals a fundamental structural barrier: without additional structure, even graph convergence fails for closed operators. 

In this work, we therefore consider maximally monotone operators~\cite{brezis1973ope}, which provide a natural and practically relevant subclass of closed operators where meaningful approximation becomes possible.
The most prominent example is the subdifferential of a convex functional, which includes many practically relevant differential operators such as the $p$-Laplace operator (which corresponds to the $p$-Dirichlet energy $J(u) = \frac1p\int \norm{\grad u(x)}^p \, dx$). More examples are given in \cref{ex-max-mono}, see also~\cite{bungert2020asymptotic, benedetti2024differential}. 

A key advantage of this class is that it admits well-behaved resolvent mappings, which are Lipschitz continuous and defined on the entire space. This enables stable approximation via Yosida regularization \cite{bauschke2011convex, rockafellar1998variational} and provides a natural bridge to continuous operator models. Moreover, maximal monotonicity is preserved under graph convergence \cite{adly2022preservation}, making it fully consistent with our approximation framework. 
In contrast, as shown in Example~\ref{ex:graph-conv}, there exist closed and monotone but not maximally monotone operators that cannot be approximated by any sequence of continuous operators even in graph convergence. 
This suggests that maximal monotonicity is a natural and
possibly near-minimal structural condition for approximability,
at least in the sense of graph convergence.
Finally, even within the class of maximally monotone operators, approximation in uniform or $L^p$ convergence is generally impossible (see Example~\ref{ex:Lp-conv}). 
This further justifies graph convergence as the appropriate notion of approximation in our setting.

\subsection{Related works}

A large body of recent work establishes approximation guarantees under the assumption that the target operator is continuous or belongs to $L^p$, 
typically proving convergence in the local uniform topology \cite{kovachki2021universal, kovachki2023neural, lanthaler2025nonlocality, godeke2025new} 
or in $L^p$-type norms \cite{bhattacharya2021model, lanthaler2022error, korolev2022two}. 
In contrast, our setting focuses on maximally monotone operators, which may be discontinuous, set-valued, and not belonging to a Bochner space. 
As a result, classical universality results in operator learning are not directly applicable in this setting.

Learning monotone or firmly non-expansive structures has also been considered in the context of inverse problems. 
\cite{pesquet2021learning} learns neural networks representing resolvents of maximally monotone operators, while \cite{bredies2024learning} proposes a data-driven approach for constructing firmly non-expansive operators. 
\cite{belkouchi2025learning} learns monotone neural networks via a penalization-based loss. 
However, these works do not address approximation of maximally monotone operators from the viewpoint of graph convergence.

Graph convergence of maximally monotone operators has been extensively studied in variational analysis and nonlinear PDEs; see, e.g., \cite{adly2022preservation,chiado1990g}. 
In particular, \cite{adly2022preservation} studies the preservation of maximal monotonicity under graph convergence. 
In contrast, our goal is not to deduce maximal monotonicity of approximating operators from graph convergence, but rather to construct approximations of a target maximally monotone operator using continuous encoder--decoder models, which are canonical architectures in operator learning \cite{kovachki2024operator, godeke2025new}. 
Importantly, our approach enforces maximal monotonicity of the approximants by construction.

Unlike classical universality results based on global uniform convergence, the graph convergence framework considered here is inherently local. 
More precisely, the approximating encoder--decoder architectures depend on the compact set on which the graph approximation is evaluated.

\subsection{Contribution}

A summary of our contributions is as follows : 
\begin{enumerate}[(1)]

\item \textbf{A paradigm shift in operator approximation.}
We show, through explicit counterexamples, that classical approximation frameworks in operator learning -- based on uniform or $L^p$ convergence -- are fundamentally inadequate for closed operators (Examples~\ref{ex:uniform-conv} and~\ref{ex:Lp-conv}). 
These results indicate that the limitation lies not in the expressive power of models, but rather in the topology used to measure approximation. 
This suggests that the notion of approximation itself must be reconsidered.

\item \textbf{Graph convergence as the appropriate topology.}
We identify graph convergence as a natural notion of approximation for closed operators, since graph convergence measures discrepancies directly at the level of graphs rather than operator values.
More precisely, we introduce a local graph distance (Definition~\ref{def:graph-distance}) that quantitatively measures discrepancies between closed graphs on compact subsets. We further prove that convergence of this local graph distance
to zero on every compact subset is equivalent to graph convergence
(Proposition~\ref{prop:equiv-graph}).

\item \textbf{Approximation of maximally monotone operators in local graph convergence.}
We show that monotonicity and closedness alone are insufficient for approximability even in graph convergence
(Example~\ref{ex:graph-conv}).
In contrast, within the class of maximally monotone operators, although approximation in uniform or $L^p$ convergence still fails in general, approximation in the sense of local graph convergence becomes possible.
In particular, we prove that any maximally monotone operator can be approximated by continuous encoder--decoder architectures
(Theorem~\ref{thm-main-1}), while maximal monotonicity can also be preserved in the approximation process (Corollary~\ref{cor-main-2}).

\end{enumerate}
%
%

\subsection{Notation}

Let $H$ be a Hilbert space.
We write $A : D(A) \subset H \to H$ to denote an operator $A$ defined on the domain $D(A)$ with values in $H$.
When we write $A : H \to H$, we mean that $D(A) = H$.
We denote by $R(A)$ the range of $A$.
If $A$ is a set-valued operator we write $A : D(A) \subset H \to 2^H$.
In this case, we define the graph of $A$ by
$
\mathrm{gph}(A) := \{ (u,v) \in H \times H \mid u \in D(A),\; v \in A(u) \}.
$
Finally, we denote the Euclidean norm on $\mathbb{R}^n$ by $|\cdot|$.

\section{Non-Approximability of Closed Operators}

In this section we define closed operators and several notions of convergence, and highlight the difficulties in establishing universal approximation results for closed operators in these topologies.

\subsection{Closed operator}

\begin{definition}[Closed operator]
We say that $A : D(A) \subset H \to 2^H$ is \emph{closed} if for any sequences
\[
x_m \in D(A), \quad x_m \to x \ \text{in } H, \qquad
y_m \in A(x_m), \quad y_m \to y \ \text{in } H,
\]
it follows that
$x \in D(A)$ and $y \in A(x)$. In other words, $\mathrm{gph}(A)$ is closed in $H \times H$. 
\end{definition}

\begin{example}
\label{ex:closed}
Closed operators include differential operators and, more generally, inverses of compact operators, which naturally arise in scientific computing and inverse problems. 

\begin{enumerate}[(a)]
  \item The differential operator $A : H^1(0,1) \subset L^2(0,1) \to L^2(0,1)$, $A(u)(x) = \frac{d}{dx}u(x)$. 
  \item Laplacian operator $\Delta : H^2(\mathbb{R}^d) \subset L^2(\mathbb{R}^d) \to L^2(\mathbb{R}^d)$.
  \item The inverse of compact operator $A : R(K) \subset H \to 2^H$, $A= K^{-1}$ where $K:H \to H$ is compact. 
\end{enumerate}

\end{example}

\subsection{Locally uniform convergence}

\begin{definition}[Locally uniform convergence]
\label{def:uniform-conv}
Let $A:D(A)\subset H \to H$ and $A_m: D(A_m) \subset H \to H$. We say that $A_m$ converges to $A$ in locally uniform convergence and write 
$$
A_m\xrightarrow{\text{\rm loc. unif.}}A
$$ 
if
for any compact set $K \subset H$ with $K \cap D(A) \cap D(A_m) \neq \emptyset$ for all $m$, it holds that 
\[
\sup_{x \in K \cap D(A) \cap D(A_m)} \|A_m(x) - A(x)\|_H \to 0. 
\]
\end{definition}

Closed operators are, in general, not continuous and therefore cannot be obtained as a limit of continuous operators under locally uniform convergence (recall that a uniform limit of continuous operators is itself continuous). We illustrate this with the following simple example

\begin{example}
\label{ex:uniform-conv}
Define the differential operator $A: H^1(0,1) \subset L^2(0,1) \to L^2(0,1)$ by 
\[
A(u)(x) := \frac{d}{dx} u(x).
\]
Then, $A$ is a closed operator (Example~\ref{ex:closed} (a)). 
Define the compact set 
\[
K:=\{0\}\cup\{v_n:n\in\mathbb{N}\}\subset D(A), \qquad
v_n(x):=\frac{1}{n}\sin(n\pi x)\in H^1(0,1).
\]
Then, for any $\varepsilon \in (0,1)$, there is no continuous operator $A_\varepsilon: D(A_\varepsilon) \subset L^2(0,1)\to L^2(0,1)$ with $K \subset D(A_\varepsilon)$ such that
\[
\sup_{u\in K}\|A_\varepsilon(u)-A(u)\|_{L^2(0,1)} \leq \varepsilon.
\]
\end{example}
The proof can be found in Appendix~\ref{app:ex:uniform-conv}.

\subsection{$L^p$ convergence}

\begin{definition}[$L^p$ convergence]
Let $\pi$ be a probability measure on $H$. 
Let $1 \le p < \infty$. Let $A:D(A)\subset H \to H$ and $A_m: D(A_m) \subset H \to H$ such that $\mathrm{supp}(\pi) \subset D(A)$ and $\mathrm{supp}(\pi) \subset D(A_m)$ for all $m$.
We say that $A_m$ converges to $A$ in $L^p(\pi;H)$ and write
\[
A_m\xrightarrow{L^p(\pi;H)}A
\quad \text{ if } \quad 
\int \|A_m(u)-A(u)\|_H^p \, d\pi(u) \to 0.
\]
\end{definition}

\begin{proposition}
\label{prop:loc-uni-Lp}
Let \(\pi\) be a probability measure on \(H\) such that
\(\operatorname{supp}(\pi)\) is compact. Assume that $\mathrm{supp}(\pi) \subset D(A)$ and $\mathrm{supp}(\pi) \subset D(A_m)$ for all $m$.
Then it holds that
$$
A_m\xrightarrow{\text{\rm loc. unif.}}A \ \implies \ A_m\xrightarrow{L^p(\pi;H)}A.
$$
\end{proposition}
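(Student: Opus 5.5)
The plan is to apply the definition of locally uniform convergence (Definition~\ref{def:uniform-conv}) with the compact set $K := \operatorname{supp}(\pi)$ and then bound the $L^p$ integral by the supremum. Three observations make this possible:
\begin{itemize}
\item $K$ is compact by assumption.
\item $K$ is nonempty, since $\pi$ is a probability measure on a separable setting (or simply because $\pi(K)=1$; we treat the support as the closed set of full measure).
\item Since $K\subset D(A)\cap D(A_m)$ for every $m$, we have $K\cap D(A)\cap D(A_m)=K\neq\emptyset$. The hypothesis of the definition is therefore met, and
\[
\varepsilon_m := \sup_{x\in K}\|A_m(x)-A(x)\|_H \longrightarrow 0 .
\]
\end{itemize}

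Next I split the integral according to whether $u$ lies in $K$. Because $\pi(H\setminus K)=0$,
\[
\int_H \|A_m(u)-A(u)\|_H^p\,d\pi(u) = \int_K \|A_m(u)-A(u)\|_H^p\,d\pi(u) \le \varepsilon_m^p\,\pi(K) = \varepsilon_m^p \to 0,
\]
which is exactly $A_m\xrightarrow{L^p(\pi;H)}A$.

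The only delicate point is measure-theoretic: the support must carry full measure, and $u\mapsto\|A_m(u)-A(u)\|_H^p$ must be measurable so that the integral is meaningful.
\begin{itemize}
\item \emph{Full measure.} The definition of $L^p$ convergence implicitly assumes the integrals make sense. For full measure of the support, I would note that $\pi(H\setminus\operatorname{supp}\pi)=0$ holds whenever $H$ is separable, or more generally for Radon measures. Here it holds because the support is compact, so the measure is concentrated on a $\sigma$-compact set.
\item \emph{Measurability.} If measurability is not assumed, I would use the upper integral, and the same bound $\varepsilon_m^p$ still applies.
\end{itemize}
Beyond this remark the argument is a one-line estimate; I do not expect any genuine obstacle.
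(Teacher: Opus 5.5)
Your proposal is correct and is essentially the paper's proof: take $K=\operatorname{supp}(\pi)$ in the definition of locally uniform convergence, and bound the integral by $\bigl(\sup_{u\in K}\|A_m(u)-A(u)\|_H\bigr)^p\pi(K)$ with $\pi(K)=1$. Your explicit check that $K\cap D(A)\cap D(A_m)=K\neq\emptyset$ is a detail the paper omits. Your ``$\sigma$-compact'' justification of $\pi(K)=1$ is circular, since compactness of the support does not by itself give full measure, but the paper simply takes $\pi(K)=1$ for granted, as is standard for separable $H$ or Radon $\pi$.
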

The proof can be found in Appendix~\ref{app:prop:loc-uni-Lp}.

The next example shows that, although $L^p(\pi;H)$-convergence is weaker than locally uniform conver\-gence, in general a closed operator cannot be approximated by continuous ones in the $L^p(\pi;H)$ sense.\looseness=-1

\begin{example}
\label{ex:Lp-conv}
Define the differential operator
$
A: H^1_{\mathrm{per}}(0,1) \subset L^2_{\mathrm{per}}(0,1) \to L^2_{\mathrm{per}}(0,1),
$ by 
\[
A(u)(x) := \frac{d}{dx} u(x).
\]
Then, $A$ is a closed operator. Moreover, $A$ is maximally monotone (See Definition~\ref{def:monotone}). 
Define a probability measure
\[
\pi:=\sum_{n=2}^\infty w_n\,\delta_{u_n},
\quad
w_n:=\frac{c}{n^{1+p/2}}, \quad 
u_n(x):=\frac{1}{\sqrt n}\sin(2\pi n x)
\]
where $c>0$ is chosen so that $\sum_{n=2}^\infty w_n=1$.
Then, for any $\varepsilon \in (0,1)$, there is no continuous $A_\varepsilon: D(A_\varepsilon) \subset L^2_{\mathrm{per}}(0,1) \to L^2_{\mathrm{per}}(0,1)$ with $\mathrm{supp}(\pi) \subset D(A_\varepsilon)$ such that
\[
\int \|A_\varepsilon(u)-A(u)\|_{L^2(0,1)}^p d\pi(u) \leq \varepsilon.
\]
\end{example}
The proof can be found in Appendix~\ref{app:ex:Lp-conv}.

\subsection{Graph convergence}

Now we introduce the graph (Painlev\'e-Kuratowski) convergence, see \cite[Sec. 2.2]{adly2022preservation}.

\begin{definition}[Graph convergence]
\label{def:Graph-convergence}
Let $A: D(A) \subset H\to 2^H$ and $A_m: D(A_m) \subset H\to 2^H$. We say that $A_m$ converges to $A$ in the sense of \emph{graph convergence} and write 
$$
A_m\xrightarrow{g}A
$$ 
\[
\text{if }
\begin{cases}
\text{\rm (outer limit condition) }
(x_m,y_m)\in \mathrm{gph}(A_m),\ (x_m,y_m)\to(x,y) \ \Rightarrow\ (x,y)\in \mathrm{gph}(A)\\[2pt]
\text{\rm (inner limit condition) }
\forall (x,y)\in \mathrm{gph}(A)\ \exists (x_m,y_m)\in \mathrm{gph}(A_m)
\text{ such that } (x_m,y_m)\to(x,y).
\end{cases}
\]
Equivalently, the outer limit condition may be stated
for every subsequence $(m_k)_{k\in\mathbb N}$.
\end{definition}

Graph convergence intuitively means that $\mathrm{gph}(A_m)$ converge to $\mathrm{gph}(A)$ as subsets of $H \times H$.
While the locally uniform and $L^p(\pi;H)$ convergences are defined for single-valued operators, graph convergence makes sense for set-valued operators. The following proposition shows that when operators are single-valued, the graph convergence is weaker than the locally uniform convergence. 

\begin{proposition}
\label{prop:loc-uni-graph}
Let $A: D(A) \subset H\to H$ and $A_m: D(A_m) \subset H\to H$ be single-valued operators. Assume that $D(A) = D(A_m)$, and $A$ is closed. 
Then it holds that
$$
A_m\xrightarrow{\text{\rm loc. unif.}}A \ \implies \ A_m\xrightarrow{g}A.
$$
\end{proposition}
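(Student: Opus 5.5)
We check the two conditions of Definition~\ref{def:Graph-convergence} separately; the assumption $D(A)=D(A_m)$ makes both the graphs and the sup in Definition~\ref{def:uniform-conv} live on one common domain.

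\emph{Inner limit condition.} This is the easy part. Given $(x,A(x))\in\mathrm{gph}(A)$, we have $x\in D(A_m)$ for every $m$, so we take $x_m:=x$ and $y_m:=A_m(x)$. The singleton $K=\{x\}$ is compact and meets $D(A)\cap D(A_m)$. Locally uniform convergence therefore gives $\|A_m(x)-A(x)\|_H\to 0$, and hence $(x_m,y_m)\to(x,A(x))$.

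\emph{Outer limit condition.} Let $(x_m,y_m)\in\mathrm{gph}(A_m)$ with $(x_m,y_m)\to(x,y)$, so $y_m=A_m(x_m)$ and $x_m\in D(A)$. The key choice is the compact set
\[
K:=\{x\}\cup\{x_m : m\in\mathbb N\},
\]
which is compact because it is a convergent sequence together with its limit. Since $x_m\in K\cap D(A)\cap D(A_m)$, the intersection is nonempty for every $m$ (and even without using $x$). Locally uniform convergence then yields
\[
\|A_m(x_m)-A(x_m)\|_H\le \sup_{z\in K\cap D(A)\cap D(A_m)}\|A_m(z)-A(z)\|_H\to 0.
\]
Consequently $A(x_m)=y_m+(A(x_m)-A_m(x_m))\to y$. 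The sequence $(x_m,A(x_m))$ lies in $\mathrm{gph}(A)$ and converges to $(x,y)$. Because $A$ is closed, $x\in D(A)$ and $y=A(x)$.

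For subsequences $(m_k)$ the same argument applies verbatim with $K=\{x\}\cup\{x_{m_k}\}$, since uniform convergence along the full sequence implies it along any subsequence.

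The only point needing care is this choice of compact set $K$ in the outer limit condition, which converts a moving-point estimate into the uniform estimate. It works precisely because $D(A)=D(A_m)$ ensures $x_m\in D(A)$, so that $A(x_m)$ is defined. Closedness of $A$ is used exactly once, at the final step.
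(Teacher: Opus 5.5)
Your proposal is correct and follows essentially the same argument as the paper. The inner limit uses the constant sequence $x_m=x$ with the singleton compact set. The outer limit uses the compact set $\{x\}\cup\{x_m : m\in\mathbb N\}$ to get $A(x_m)\to y$ and then closedness of $A$; your remarks on nonemptiness of $K\cap D(A)\cap D(A_m)$ and on subsequences just make explicit details the paper leaves implicit.
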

The proof can be found in Appendix~\ref{app:prop:loc-uni-graph}.

Just as with $L^p(\pi;H)$-convergence, 
even though graph convergence is weaker than locally uniform one, an arbitrary closed operator cannot be approximated by continuous ones in the graph sense. 

\begin{example}
\label{ex:graph-conv}
Let \(H=\mathbb{R}\). Define a set-valued operator
\(A: \mathbb{R}\to 2^{\mathbb{R}}\) by \(D(A)=\mathbb{R}\) and
\[
A(x):=
\begin{cases}
\{0\}, & x<0,\\[2mm]
\{0,1\}, & x=0,\\[2mm]
\{1\}, & x>0.
\end{cases}
\]
Then \(A\) is closed. Moreover, $A$ is monotone, but not maximally monotone (See Definition~\ref{def:monotone}).
In addition, there is no sequence of continuous operators
\(A_m:\mathbb{R}\to\mathbb{R}\) such that
\[
A_m\xrightarrow{g} A .
\]
\end{example}

The proof can be found in ~\ref{app:ex:graph-conv}.


Now we define a notion of distance that will play an important role in our approximation results.

\begin{definition}[Local graph distance]\label{def:graph-distance}
For $A: D(A) \subset H\to 2^H$ and $B: D(B) \subset H\to 2^H$, define the \emph{local graph distance} for any compact set $K \subset H \times H$
\begin{align*}
    d_{\mathrm{gph},K}(A_m,A)
& :=
\max\left\{ h_{K}(\mathrm{gph}(B),\mathrm{gph}(A)), h_{K}(\mathrm{gph}(A),\mathrm{gph}(B)) \right\},
\end{align*}
where $h_{K}$ is defined by for $X, Y \subset H \times H$
\[
h_K(X,Y) :=
\begin{cases}
\sup_{(x, y) \in X \cap K} \inf_{(u,v) \in Y} \|(x,y) - (u,v) \|_{H\times H}, & X \cap K \neq \emptyset,\\[2mm]
0, & X \cap K = \emptyset.
\end{cases}
\]
\end{definition}



The reason for introducing the local graph distance is that graph convergence, as defined in Definition~\ref{def:Graph-convergence}, is qualitative in nature.
In contrast, the local graph distance provides a quantitative measure of discrepancies between graphs on compact subsets, which allows us to formulate quantitative approximation results.

In the finite-dimensional setting, \cite[Remark 2.1]{adly2022preservation}
shows that graph convergence coincides with the $\rho$-Hausdorff
convergence, which is induced by a genuine metric topology.
In infinite-dimensional Hilbert spaces, however, extending such
a metric characterization is nontrivial.
For this reason, we introduce a weaker local graph distance,
which is generally not a metric.
Nevertheless, the following proposition shows that it still characterizes graph convergence equivalently.


\begin{proposition}
\label{prop:equiv-graph}
Let $A: D(A) \subset H\to 2^H$ and $A_m: D(A_m) \subset H\to 2^H$ be closed. Then the following are equivalent :
\[
\begin{aligned}
A_m \xrightarrow{g} A
\quad \Longleftrightarrow \quad & 
d_{\mathrm{gph},K}(A_m,A)\to 0 \ \text{ for every compact } K \subset H\times H.
\end{aligned}
\]
\end{proposition}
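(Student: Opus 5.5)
The plan is to prove the two implications separately, working with the distance functions $q \mapsto \operatorname{dist}(q, \mathrm{gph}(A))$ and $q \mapsto \operatorname{dist}(q, \mathrm{gph}(A_m))$ on $H \times H$. Both are $1$-Lipschitz, and by closedness of the graphs they vanish exactly on the graph. We read $d_{\mathrm{gph},K}(A_m,A)$ as the maximum of the two one-sided quantities $h_K(\mathrm{gph}(A_m),\mathrm{gph}(A))$ and $h_K(\mathrm{gph}(A),\mathrm{gph}(A_m))$.

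\emph{($\Leftarrow$)} We test the hypothesis on small compact sets.

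For the outer limit condition, take $(x_m,y_m)\in\mathrm{gph}(A_m)$ with $(x_m,y_m)\to(x,y)$. Set $K=\{(x_m,y_m)\}_m\cup\{(x,y)\}$, which is compact as a convergent sequence together with its limit. Then
\[
\operatorname{dist}((x_m,y_m),\mathrm{gph}(A))\le h_K(\mathrm{gph}(A_m),\mathrm{gph}(A))\to 0 .
\]
By continuity of the distance function, $\operatorname{dist}((x,y),\mathrm{gph}(A))=0$. Since $A$ is closed, $(x,y)\in\mathrm{gph}(A)$.

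For the inner limit condition, fix $(x,y)\in\mathrm{gph}(A)$ and take $K=\{(x,y)\}$. Then $\operatorname{dist}((x,y),\mathrm{gph}(A_m))\le d_{\mathrm{gph},K}(A_m,A)\to 0$. For each large $m$ this distance is finite, so $\mathrm{gph}(A_m)\neq\emptyset$, and we can pick $(x_m,y_m)\in \mathrm{gph}(A_m)$ within $\operatorname{dist}((x,y),\mathrm{gph}(A_m))+1/m$ of $(x,y)$. For the finitely many remaining indices an arbitrary choice does not affect convergence.

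\emph{($\Rightarrow$)} Fix a compact set $K$ and treat the two one-sided terms separately.

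For $h_K(\mathrm{gph}(A_m),\mathrm{gph}(A))$ we argue by contradiction. Suppose there are $\varepsilon>0$, a subsequence $m_k$, and points $p_k\in K\cap \mathrm{gph}(A_{m_k})$ with $\operatorname{dist}(p_k,\mathrm{gph}(A))\ge\varepsilon$. By compactness of $K$, after passing to a further subsequence $p_k\to p\in K$. The subsequence form of the outer limit condition (noted after Definition~\ref{def:Graph-convergence}) gives $p\in\mathrm{gph}(A)$. But continuity of the distance gives $\operatorname{dist}(p,\mathrm{gph}(A))\ge\varepsilon$, a contradiction.

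For $h_K(\mathrm{gph}(A),\mathrm{gph}(A_m))$, note that $C:=K\cap\mathrm{gph}(A)$ is compact because $\mathrm{gph}(A)$ is closed; if $C=\emptyset$ the term is $0$. Otherwise, put $f_m(q):=\operatorname{dist}(q,\mathrm{gph}(A_m))$.
\begin{itemize}
\item By the inner limit condition, $f_m(q)\to 0$ for every $q\in C$. In particular $\mathrm{gph}(A_m)\neq\emptyset$ for large $m$, so each such $f_m$ is finite and $1$-Lipschitz.
\item Pointwise convergence of an equi-Lipschitz family on a compact set is uniform. Concretely, cover $C$ by finitely many balls of radius $\varepsilon$ centered at $q_1,\dots,q_N\in C$, and use $f_m(q)\le f_m(q_i)+\varepsilon$ on each ball.
\end{itemize}
Hence $\sup_C f_m\to 0$, which is exactly the vanishing of this one-sided term.

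The only delicate point is this last step: upgrading the pointwise inner-limit statement to a uniform one on $K$. It is handled by the $1$-Lipschitz property of distance functions together with the finite $\varepsilon$-net; one must also treat the possibly empty graphs $\mathrm{gph}(A_m)$ for finitely many $m$. The remaining steps are routine compactness arguments. Closedness of $A$ is used in both directions, whereas closedness of the $A_m$ is not needed.
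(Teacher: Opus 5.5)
Your proposal is correct and follows the paper's proof. It uses the same test sets, $\{(x,y)\}\cup\{(x_m,y_m)\}_m$ and $\{(x,y)\}$, for $(\Leftarrow)$, and the same compactness argument combined with the subsequential outer limit for $h_K(\mathrm{gph}(A_m),\mathrm{gph}(A))$. The only difference is in bounding $h_K(\mathrm{gph}(A),\mathrm{gph}(A_m))$: you upgrade the pointwise inner limit to uniform convergence on $K\cap\mathrm{gph}(A)$ with a finite $\varepsilon$-net and the $1$-Lipschitz distance function, while the paper argues by contradiction (convergent subsequence in $K$, closedness of $\mathrm{gph}(A)$, inner limit at the limit point, triangle inequality), which is the same fact proved sequentially.
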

We refer to Appendix~\ref{app:prop:equiv-graph} for the proof.

\section{Approximation of Maximally Monotone Operators}
\label{sec:approximation}

In the previous section, we discussed the difficulty of approximating closed operators in the sense of uniform, $L^p$, and even graph convergence. 
A natural solution is to impose additional structure on the target operator. 
In this section, we focus on maximally monotone operators and establish their universal approximation.

\subsection{Maximally monotone operators}
\label{sec:maximally-monotone}

\begin{definition}[Monotone and maximally monotone operators~\cite{brezis1973ope}]
\label{def:monotone}
We say that $A : D(A) \subset H \to 2^H$ is \emph{monotone} if
\[
  \langle y - v,\, x - u \rangle_H \ge 0
  \quad \forall (x,y), (u,v)\in \mathrm{gph}(A).
\]
We say that monotone operator $A : H \to 2^H$ is \emph{maximally monotone} if there exists no monotone operator $B : D(B) \subset H \to 2^H$ such that $\mathrm{gph}(B)$ properly contains $\mathrm{gph}(A)$. 
\end{definition}

It is well known that any maximally monotone operator is closed (e.g., {\cite[Prop. 20.33]{bauschke2011convex}}).

\begin{proposition}
\label{prop:maxmally-closed}
If $A : D(A) \subset H \to 2^H$ is maximally monotone, then $A$ is closed. 
\end{proposition}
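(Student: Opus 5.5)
The plan is to prove the statement directly from the definitions, using maximality to add a limit point to the graph. Take sequences $(x_m,y_m)\in\mathrm{gph}(A)$ with $x_m\to x$ and $y_m\to y$ in $H$. We must show $(x,y)\in\mathrm{gph}(A)$. We will check that the enlarged set $\mathrm{gph}(A)\cup\{(x,y)\}$ is still the graph of a monotone operator. Maximality then forces it to coincide with $\mathrm{gph}(A)$.

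\textbf{Step 1 (monotonicity passes to the limit).} Fix an arbitrary $(u,v)\in\mathrm{gph}(A)$. By monotonicity of $A$,
\[
\langle y_m - v,\, x_m - u\rangle_H \ge 0 \quad \text{for all } m .
\]
The inner product is jointly continuous, so $\langle y_m - v, x_m-u\rangle_H \to \langle y - v, x-u\rangle_H$. Hence $\langle y-v, x-u\rangle_H\ge 0$ for every $(u,v)\in\mathrm{gph}(A)$. The pair $(x,y)$ is trivially monotone with itself.

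\textbf{Step 2 (invoke maximality).} Define $B$ by $\mathrm{gph}(B):=\mathrm{gph}(A)\cup\{(x,y)\}$, that is, $D(B)=D(A)\cup\{x\}$ and $B(z)=A(z)\cup\{y \st z=x\}$. By Step 1 and the monotonicity of $A$, the operator $B$ is monotone, and $\mathrm{gph}(B)\supseteq\mathrm{gph}(A)$. If the inclusion were proper, this would contradict the maximal monotonicity of $A$ in Definition~\ref{def:monotone}. Hence $\mathrm{gph}(B)=\mathrm{gph}(A)$, so $(x,y)\in\mathrm{gph}(A)$: $x\in D(A)$ and $y\in A(x)$. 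This is exactly closedness of $A$.

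There is no real obstacle here. The only points needing care are the joint continuity of the inner product under simultaneous strong convergence of both arguments, and the fact that maximality is phrased via proper graph inclusion among monotone operators with arbitrary domains. The construction of $B$ must therefore allow $x\notin D(A)$, which the set-valued formalism handles automatically. Alternatively, one could cite \cite[Prop.~20.33]{bauschke2011convex} directly, but the above self-contained argument is short enough to include.
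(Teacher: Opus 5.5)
Your argument is correct and is the standard one: pass to the limit in $\langle y_m - v,\, x_m - u\rangle_H \ge 0$ for every $(u,v)\in\mathrm{gph}(A)$, then use maximality to absorb $(x,y)$ into the graph. The paper gives no proof of its own and simply cites \cite[Prop.~20.33]{bauschke2011convex}, which rests on this same reasoning.
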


\begin{example}
\label{ex-max-mono}
We give some examples of maximally monotone operators. Note that appropriate boundary conditions may be required to ensure maximal monotonicity. 

\begin{enumerate}[(a)]
    \item Subdifferentials of convex functionals $H \to \RI$. For example, the $p$-Laplacian, $1 < p < +\infty$,  $A : W^{1,p}(\Omega) \subset L^2(\Omega) \to L^2(\Omega)$ for a bounded open set $\Omega \subset \mathbb{R}^d$, defined as $A(u) = -\mathrm{div}(|\nabla u|^{p-2} \nabla u)$, which is a subgradient of the $p$-Dirichlet energy functional $J \colon L^2(\Omega) \to \RI$ given by
    \begin{equation*}
        J(u) =
        \begin{cases}
        \frac1p\int_\Omega \norm{\grad u}^p \, dx,  & u \in W^{1,p}(\Omega), \\
        +\infty,  & u \in L^2(\Omega) \setminus W^{1,p}(\Omega).
        \end{cases}
    \end{equation*}
    \item The differentiation operator (the inverse Volterra operator~\cite{bauschke2010examples}) on periodic functions $A : H^1_{\mathrm{per}}(0,1) \subset L^2_{\mathrm{per}}(0,1) \to L^2_{\mathrm{per}}(0,1)$, $A(u)(x) = \frac{d}{dx}u(x)$ (which is not symmetric and therefore cannot be the subdifferential of a convex function).
  %
\end{enumerate}
More examples can be found in Appendix~\ref{app:ex-max-mono} and in~\cite{benedetti2024differential}.
\end{example}

\begin{remark}
We give several remarks on the approximability of maximally monotone operators with respect to the choice of convergence topology.

\begin{enumerate}[(1)]
\item 
Example~\ref{ex-max-mono} (b) shows that the differential operator 
$
A: H^1_{\mathrm{per}}(0,1) \subset L^2_{\mathrm{per}}(0,1) \to L^2_{\mathrm{per}}(0,1)$, $A(u)(x) := \frac{d}{dx}u(x)$ is maximally monotone. 
On the other hand, Example~\ref{ex:Lp-conv} shows that $A = \frac{d}{dx}$ cannot be approximated by any sequence of continuous operators in $L^p$, which also rules out uniform convergence by Proposition~\ref{prop:loc-uni-Lp}. 
This demonstrates that, even under maximal monotonicity, approximation in $L^p$ or uniform convergence may fail.

\item
Example~\ref{ex:graph-conv} shows that there exist closed and monotone but not maximally monotone operators that cannot be approximated by any sequence of continuous operators even in the graph  sense. 
This indicates that maximal monotonicity is a sharp structural condition for approximability.
\end{enumerate}
\end{remark}

\begin{definition}[Yosida approximation]
The \emph{Yosida approximation} of maximally monotone operator $A : D(A) \subset H \to 2^H$ is a family $(A_\lambda)_{\lambda>0}$ of single-valued operator $:H \to H$ defined by
\[
A_\lambda := \frac{I - J_\lambda}{\lambda}, \quad
J_\lambda := (I + \lambda A)^{-1}. 
\]
\end{definition}


By Minty's theorem \cite[Thm. 21.1]{bauschke2011convex}, the Yosida approximation $(A_\lambda)_{\lambda>0}$, and in particular the resolvent $J_\lambda$, is well-posed. Moreover, the Yosida approximation $(A_\lambda)_{\lambda>0}$ has the following properties  (e.g., \cite[Cor. 23.10, Cor. 23.46]{bauschke2011convex}).
\begin{proposition}
Let $A : D(A) \subset H \to 2^H$ be maximally monotone.
$A_\lambda : H \to H$ is $\frac{1}{\lambda}$ Lipschitz continuous and maximally monotone. Moreover, it holds that for $x \in D(A)$, 
$
\lim_{\lambda \to 0} A_\lambda(x) = A^0(x),
$
where $A^0(x)$ denotes the element of minimal norm in $A(x)$.
\end{proposition}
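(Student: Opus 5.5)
We prove the three claims in order: firm nonexpansiveness of $J_\lambda$, then the Lipschitz bound and maximal monotonicity of $A_\lambda$, then the limit $A_\lambda(x)\to A^0(x)$.

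\textbf{Step 1: firm nonexpansiveness of $J_\lambda$.} By Minty's theorem, $J_\lambda=(I+\lambda A)^{-1}$ is single-valued with domain $H$. Take $x,x'\in H$ and set $p=J_\lambda x$, $p'=J_\lambda x'$. Then $(x-p)/\lambda\in A(p)$ and $(x'-p')/\lambda\in A(p')$, so monotonicity of $A$ gives
\[
\langle (x-p)-(x'-p'),\,p-p'\rangle_H\ge 0,
\quad\text{i.e.}\quad
\|p-p'\|^2\le\langle x-x',p-p'\rangle_H .
\]

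\textbf{Step 2: properties of $A_\lambda$.} Write $\lambda A_\lambda=I-J_\lambda$. Expanding the squares and using Step 1 yields
\[
\|(I-J_\lambda)x-(I-J_\lambda)x'\|^2\le\langle x-x',(I-J_\lambda)x-(I-J_\lambda)x'\rangle_H .
\]
This is exactly the firm nonexpansiveness of $I-J_\lambda$. Dividing by $\lambda^2$ gives
\[
\|A_\lambda x-A_\lambda x'\|^2\le \tfrac1\lambda\langle x-x',A_\lambda x-A_\lambda x'\rangle_H .
\]
Two consequences follow. First, the inner product is nonnegative, so $A_\lambda$ is monotone. Second, Cauchy--Schwarz gives $\|A_\lambda x-A_\lambda x'\|\le\frac1\lambda\|x-x'\|$, so $A_\lambda$ is $\frac1\lambda$-Lipschitz.

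For maximality, we use the standard fact that a continuous monotone operator defined on all of $H$ is maximally monotone. To verify it, suppose $\langle v-A_\lambda y,u-y\rangle\ge0$ for all $y$. Put $y=u-tw$ with $t>0$, divide by $t$, and let $t\downarrow0$; continuity of $A_\lambda$ gives $\langle v-A_\lambda u,w\rangle\ge0$ for all $w$. Hence $v=A_\lambda u$. Alternatively, one can invoke Minty directly, since $I+\mu A_\lambda$ is surjective for the Lipschitz monotone $A_\lambda$.

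\textbf{Step 3: the limit, first a uniform bound.} Fix $x\in D(A)$ and $y\in A(x)$. We have $A_\lambda x\in A(J_\lambda x)$, and $x-J_\lambda x=\lambda A_\lambda x$. Monotonicity applied to the pairs $(x,y)$ and $(J_\lambda x,A_\lambda x)$ gives $\langle A_\lambda x-y,\lambda A_\lambda x\rangle\ge0$. Therefore $\|A_\lambda x\|^2\le\langle y,A_\lambda x\rangle$, so $\|A_\lambda x\|\le\|y\|$. Taking $y=A^0(x)$, which exists because $A(x)$ is closed and convex by maximality, we get $\|A_\lambda x\|\le\|A^0x\|$. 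Consequently $J_\lambda x\to x$ as $\lambda\to0$.

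\textbf{Step 4: identifying the limit.} This is the main obstacle, because we only have boundedness and weak compactness. Take any sequence $\lambda_n\to0$. By boundedness, a subsequence satisfies $A_{\lambda_n}x\rightharpoonup z$ weakly. For each $(u,v)\in\mathrm{gph}(A)$, monotonicity gives
\[
\langle A_{\lambda_n}x-v,\,J_{\lambda_n}x-u\rangle\ge0 .
\]
Here the first factor converges weakly and the second strongly (to $x-u$), so we may pass to the limit and obtain $\langle z-v,x-u\rangle\ge0$. Maximality then yields $z\in A(x)$.

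Weak lower semicontinuity of the norm gives $\|z\|\le\liminf\|A_{\lambda_n}x\|\le\|A^0x\|$. By uniqueness of the minimal-norm element, $z=A^0x$. Moreover $\limsup\|A_{\lambda_n}x\|\le\|z\|$, and weak convergence together with convergence of norms gives strong convergence. Since every sequence has a subsequence converging to the same limit $A^0x$, the whole family converges: $A_\lambda x\to A^0(x)$.
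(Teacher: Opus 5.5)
Your proof is correct and is essentially the standard textbook argument behind the results the paper cites without proof (Bauschke--Combettes, Cor.~23.10 and Cor.~23.46). It goes through firm nonexpansiveness of $I-J_\lambda$ giving $\lambda$-cocoercivity of $A_\lambda$, maximality of continuous everywhere-defined monotone maps, and the bound $\|A_\lambda x\|\le\|A^0(x)\|$, which together with identifying weak cluster points in $A(x)$ via maximality and a norm argument yields strong convergence. One small slip: in Step 3 the monotonicity inequality should read $\langle A_\lambda x-y,\,J_\lambda x-x\rangle=-\lambda\langle A_\lambda x-y,A_\lambda x\rangle\ge0$ (you wrote it with the opposite sign), though the conclusion $\|A_\lambda x\|^2\le\langle y,A_\lambda x\rangle$ that you draw from it is the correct one.
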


Note that the above proposition states that, if $A$ is a single-valued maximally monotone operator, then $A^0$ coincides with $A$. In particular, this implies that $A_\lambda$ converges to $A$ pointwise. On the other hand, the following proposition shows that the Yosida approximation of a (possibly set-valued) maximally monotone operator converges to $A$ in the sense of graph convergence.

\begin{proposition}
\label{prop:yosida-graph-conv}
Let $A : D(A) \subset H \to 2^H$ be maximally monotone. Then it holds that 
\[
A_\lambda\xrightarrow{g}A \quad (\lambda \to 0)
\]
\end{proposition}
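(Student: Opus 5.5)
We verify the two conditions of Definition~\ref{def:Graph-convergence} for a sequence $\lambda_m \to 0$, writing $A_m := A_{\lambda_m}$. The key tool is the standard identity linking the Yosida approximation to the graph of $A$: for every $x \in H$ and $\lambda>0$,
\[
A_\lambda(x) \in A(J_\lambda x), \qquad x - J_\lambda x = \lambda A_\lambda(x).
\]
This follows directly from the definition $J_\lambda = (I+\lambda A)^{-1}$. If $p = J_\lambda x$, then $x \in p + \lambda A(p)$, so $(x-p)/\lambda \in A(p)$.

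\textbf{Outer limit condition.} Let $(x_m, y_m) \in \mathrm{gph}(A_m)$, so $y_m = A_{\lambda_m}(x_m)$, and suppose $(x_m,y_m) \to (x,y)$. Set $p_m := J_{\lambda_m} x_m$. Then $(p_m, y_m) \in \mathrm{gph}(A)$ and
\[
\|p_m - x_m\| = \lambda_m \|y_m\|.
\]
Since $(y_m)$ converges, it is bounded, so $\lambda_m\|y_m\| \to 0$. Hence $p_m \to x$. Now $(p_m, y_m) \to (x,y)$ in $H\times H$ with $(p_m,y_m)\in\mathrm{gph}(A)$. Since $A$ is closed by Proposition~\ref{prop:maxmally-closed}, we conclude $(x,y)\in \mathrm{gph}(A)$. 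The same argument applies verbatim along any subsequence.

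\textbf{Inner limit condition.} Fix $(x,y)\in\mathrm{gph}(A)$. The natural candidate is
\[
x_m := x + \lambda_m y, \qquad y_m := A_{\lambda_m}(x_m).
\]
We have $x + \lambda_m y \in x + \lambda_m A(x)$, so $J_{\lambda_m}(x_m) = x$. Therefore
\[
y_m = \frac{x_m - x}{\lambda_m} = y
\]
exactly. Since $x_m \to x$ and $y_m = y$, we get $(x_m,y_m)\to(x,y)$. This choice avoids needing the convergence $A_\lambda(x)\to A^0(x)$, which would only recover the minimal-norm element and not an arbitrary $y\in A(x)$.

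\textbf{Main obstacle.} The only delicate point is the inner limit condition for set-valued $A$. Pointwise evaluation $A_\lambda(x)$ recovers only $A^0(x)$, so the approximating input must be shifted to $x+\lambda y$ in order to recover an arbitrary selection $y$. The outer condition reduces to closedness of $\mathrm{gph}(A)$ once we note that $J_\lambda$ displaces points by only $O(\lambda)$ along convergent sequences.
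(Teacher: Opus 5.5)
Your proof is correct and, once unpacked, is the same argument as the paper's. The paper writes $A_\lambda=(\lambda I+A^{-1})^{-1}$ and combines a lemma that $\lambda_m I+B\xrightarrow{g}B$ for closed $B$ with stability of graph convergence under inversion; in coordinates these reduce exactly to your outer step ($J_{\lambda_m}x_m=x_m-\lambda_m y_m\to x$ plus closedness of $\mathrm{gph}(A)$) and your inner step (the shifted point $x+\lambda_m y$ with $A_{\lambda_m}(x+\lambda_m y)=y$), so your direct version simply skips the detour through $A^{-1}$, though you should state explicitly that $J_{\lambda_m}(x+\lambda_m y)=x$ uses single-valuedness of the resolvent, which follows from monotonicity of $A$.
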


A proof of this statement in the finite-dimensional case $H = \R^n$ can be found in~\cite{rockafellar1998variational} but it holds for a general Hilbert space $H$, see Appendix~\ref{app:prop:maxmally-closed}.

\subsection{Encoder--decoder architectures}

\begin{definition}[Encoder--decoder approximation property]
\label{def:EDAP}
A Hilbert space $H$ is said to have the encoder--decoder approximation property (EDAP) if there exist sequences of 1-Lipschitz continuous maps 
\[
E_m : H \to \mathbb{R}^{w_m}, \qquad D_m : \mathbb{R}^{w_m} \to H,
\]
with $E_m(0)=0$, $D_m(0)=0$, and $(w_m)_{m \in \mathbb{N}} \subset \mathbb{N}$ with $w_m \to \infty$, such that for any compact set $K \subset H$, it holds that
\[
\lim_{m \to \infty}
\sup_{u \in K} \| u - D_m \circ E_m(u) \|_{H} = 0.
\]
\end{definition}

\begin{definition}[Universal approximator]
\label{def:UA}
A class $\mathcal{F}$ of continuous functions is called a universal approximator if, for any $p \in \mathbb{N}$, any compact set $K \subset \mathbb{R}^p$, and any continuous function $\varphi : \mathbb{R}^p \to \mathbb{R}^p$, there exists a sequence $(\varphi_n)_{n \in \mathbb{N}} \subset \mathcal{F}$ such that
\begin{equation}
\label{eq:uniform-def}
\lim_{n \to \infty}\sup_{x \in K}\left| \varphi_n(x) - \varphi(x) \right| = 0.
\end{equation}
Similarly, a class $\mathcal{F}^{\mathrm{Lip}}$ of 1-Lipschitz continuous functions is called a 1-Lipschitz universal approximator if, for any $p \in \mathbb{N}$, any compact set $K \subset \mathbb{R}^p$, and any 1-Lipschitz continuous function $\varphi : \mathbb{R}^p \to \mathbb{R}^p$, there exists a sequence $(\varphi_n)_{n \in \mathbb{N}} \subset \mathcal{F}^{\mathrm{Lip}}$ such that
 such that 
\eqref{eq:uniform-def} holds.
\end{definition}

\begin{definition}[Encoder--decoder architectures]
\label{def:Encoder--Decoder}
Let $H$ be a Hilbert space satisfying EDAP with encoder--decoder pairs $(E_m, D_m)$. 
Let $\mathcal{F}$ (resp.\ $\mathcal{F}^{\mathrm{Lip}}$) be a (1-Lipschitz) universal approximator.
We call operators of the form
\[
D_m \circ \varphi \circ E_m : H \to H, \qquad \varphi \in \mathcal{F} \ (\text{resp.\ } \mathcal{F}^{\mathrm{Lip}}),
\]
(1-Lipschitz) encoder--decoder architectures.
\end{definition}

The encoder--decoder architectures $D_m \circ \varphi \circ E_m$ constitute a canonical class of operator models; see, e.g., \cite{godeke2025new, kovachki2024operator}. 
For example, classes such as DeepONet and neural operators can be interpreted within this framework. 

A typical example for encoder-decoder pair $(E_m, D_m)$ is as follows: let $H$ be a Hilbert space with a complete orthonormal basis $(e_i)_{i \in \mathbb{N}}$. 
Define $E_m$ as the orthogonal projection onto the subspace $\mathrm{span}\{e_i : i \leq m\}$, and let $D_m$ be its adjoint operator. 
Then both $E_m$ and $D_m$ are 1-Lipschitz continuous.

The class $\mathcal{F}$ can be taken as a family of neural networks that are universal approximators for continuous functions on Euclidean space \cite{cybenko1989approximation,hanin2017approximating,hornik1989multilayer}. 
Similarly, the class $\mathcal{F}^{\mathrm{Lip}}$ can be chosen as neural networks that preserve $1$-Lipschitz continuity, which have been studied in recent works such as
\cite{anil2019sorting,miyato2018spectral,murari2025approximation,neumayer2023approximation}.

Recent work \cite[Thm. 4.4]{godeke2025new} shows that encoder-decoder architecture ensure the locally uniform convergence to any continuous operator. 
\begin{proposition}
\label{prop:EDA-uniform-conv}
Let $\mathcal{F}$ be universal approximator, and let $H$ have the EDAP with $E_m$ and $D_m$. 
Let $A : D(A) \subset H \to H$ be a continuous operator. 
Then, there exist a sequence $(\varphi_m)_{m \in \mathbb{N}} \subset \mathcal{F}$ such that
\begin{equation}
\label{eq:EDA-uniform-conv}
D_m \circ \varphi_m \circ E_m \xrightarrow{\text{\rm loc. unif.}}A, \quad (m \to \infty).
\end{equation}
Moreover, if $A : D(A) \subset H \to H$ is 1-Lipschitz continuous, and $\mathcal{F}^{\mathrm{Lip}}$ is 1-Lipschitz universal approximator, then, there exist a sequence $(\varphi_m)_{m \in \mathbb{N}} \subset \mathcal{F}^{\mathrm{Lip}}$ such that encoder--decoder architectures $E_m \circ \varphi_m \circ D_m$ is 1-Lipschitz continuous, and \eqref{eq:EDA-uniform-conv} holds.
\end{proposition}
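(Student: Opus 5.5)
We prove Proposition~\ref{prop:EDA-uniform-conv} by a two-stage approximation: first replace $A$ by its finite-dimensional surrogate $D_m\circ (E_m\circ A\circ D_m)\circ E_m$, then approximate the inner Euclidean map by elements of $\mathcal{F}$ (resp.\ $\mathcal{F}^{\mathrm{Lip}}$) to accuracy $1/m$ on a suitable compact set. For the triangle inequality we use that $D_m$ is $1$-Lipschitz, and for the surrogate we use that $E_m$, $D_m$ are $1$-Lipschitz with $D_m\circ E_m\to I$ uniformly on compacts. To make the inner map defined on all of $\mathbb{R}^{w_m}$ we extend $A$ first. In the Lipschitz case, Kirszbraun's theorem extends a $1$-Lipschitz $A|_{D(A)}$ to a $1$-Lipschitz $\tilde A:H\to H$. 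In the continuous case, we extend $A$ continuously from the relevant compact pieces via Dugundji's extension theorem, which gives a continuous $\tilde A:H\to H$ agreeing with $A$ on closed subsets of $D(A)$. If $D(A)$ is not closed, we only need the extension on each compact $K\cap D(A)$, which may itself fail to be closed; we handle this by restricting to compact subsets of $D(A)$.

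\textbf{Step 1 (surrogate).} Set $\psi_m := E_m\circ\tilde A\circ D_m:\mathbb{R}^{w_m}\to\mathbb{R}^{w_m}$, which is continuous (resp.\ $1$-Lipschitz). Fix a compact $K$. Then
\[
\|D_m\psi_m E_m u - A u\| \le \|D_mE_m(\tilde A D_mE_m u) - \tilde A D_mE_m u\| + \|\tilde A D_mE_m u - \tilde A u\|.
\]
The set $K' := K\cup\bigcup_m D_mE_m(K)$ is compact, since $D_mE_m\to I$ uniformly on $K$ and a sequence of compacts converging uniformly to $K$ has compact union with $K$. Hence $\tilde A$ is uniformly continuous on $K'$, so the second term tends to $0$ uniformly on $K$. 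For the first term, $\tilde A(K')$ is compact, and EDAP applied to this compact set gives uniform convergence to $0$.

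\textbf{Step 2 (network).} The approximator $\varphi_m$ must be chosen independently of $K$, which is the main obstacle. We handle it with a diagonal argument: take $\varphi_m\in\mathcal{F}$ with
\[
\sup_{|x|\le m}|\varphi_m(x)-\psi_m(x)|\le 1/m.
\]
Since $E_m$ is $1$-Lipschitz with $E_m(0)=0$, we have $|E_m u|\le\|u\|\le R_K$ for $u\in K$. So for $m\ge R_K$ the error $\|D_m\varphi_mE_m u - D_m\psi_mE_m u\|\le 1/m$ on $K$, again because $D_m$ is $1$-Lipschitz. Combining with Step 1 gives local uniform convergence on $K\cap D(A)$.

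\textbf{Lipschitz case.} Here $\psi_m$ is $1$-Lipschitz and $\varphi_m\in\mathcal{F}^{\mathrm{Lip}}$. Then $D_m\circ\varphi_m\circ E_m$ is a composition of $1$-Lipschitz maps, hence $1$-Lipschitz. The remaining technical point is the extension in the merely continuous case with non-closed $D(A)$. If global continuous extension fails, we instead prove the statement as given, namely convergence only on $K\cap D(A)$ for compact $K$, by extending from the compact closure pieces on which $A$ is uniformly continuous.
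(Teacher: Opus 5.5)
Your two-stage argument is sound whenever a single continuous extension $\tilde A:H\to H$ is available. The surrogate $D_m\circ(E_m\circ\tilde A\circ D_m)\circ E_m$, the compactness of $K\cup\bigcup_m D_mE_m(K)$, and the $K$-independent choice of $\varphi_m$ on the ball of radius $m$ (using $|E_m u|\le\|u\|$) all work. Kirszbraun's theorem correctly settles the $1$-Lipschitz part for arbitrary $D(A)$; a Dugundji extension would not keep $\psi_m$ $1$-Lipschitz, so Kirszbraun is the right tool there. The paper gives no proof of its own; it only cites Theorem~4.4 of \cite{godeke2025new}. It also only applies the result to everywhere-defined maps ($A_{\mu_n}$ in Theorem~\ref{thm-main-1}, $R_{\lambda_m}$ in Corollary~\ref{cor-main-2}), and in that setting your argument is complete. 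Dugundji also covers closed $D(A)$, provided you extend once from $D(A)$ itself. If you extend from $K$-dependent ``compact pieces'', then $\psi_m$, and hence $\varphi_m$, depends on $K$, which is exactly what your Step~2 is built to avoid.

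The genuine gap is your last paragraph: the merely continuous case with non-closed $D(A)$. Nothing forces $A$ to be uniformly continuous, or even bounded, on $K\cap D(A)$, and $A$ is not defined on the closure. So the ``compact closure pieces on which $A$ is uniformly continuous'' need not exist. No argument can close this case, because the statement is false there. Take $H=\mathbb{R}$, which has the EDAP with $E_m(x)=(x,0,\dots,0)\in\mathbb{R}^{w_m}$ and $D_m(y)=y_1$, together with $D(A)=(0,1]$, $A(x)=1/x$ and $K=[0,1]$. Every $D_m\circ\varphi_m\circ E_m$ is continuous on $\mathbb{R}$, hence bounded on $[0,1]$, so $\sup_{x\in K\cap D(A)}|D_m\varphi_m E_m(x)-A(x)|=\infty$ for every $m$. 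For a bounded counterexample, $A(x)=\sin(1/x)$ works, since a uniform limit on $(0,1]$ of uniformly continuous functions is uniformly continuous. You should therefore state and prove the continuous case under the hypothesis that $D(A)$ is closed (or $D(A)=H$), which is all the paper needs, rather than claim the general case.
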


\subsection{Approximation result}

We first establish the following main result that 

\begin{theorem}
\label{thm-main-1}
Let $\mathcal{F}$ be a universal approximators, and assume that $H$ has the EDAP with encoder and decoder mappings $E_m$ and $D_m$. 
Let $A : D(A) \subset H \to 2^H$ be maximally monotone. 
Then, 
for any compact set $K \subset H \times H$, there exists a sequence $(\varphi_{m})_{m\in\mathbb{N}} \subset \mathcal{F}$ such that as $m \to \infty$
\[
d_{\mathrm{gph},K}(A_m,A)\to 0, \quad
A_{m} := D_m \circ \varphi_{m} \circ E_m.
\]
\end{theorem}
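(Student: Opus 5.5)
The plan is a two-stage approximation: first replace $A$ by its Yosida approximation $A_\lambda$, then replace $A_\lambda$ by an encoder--decoder architecture, and control the local graph distance through a triangle-type inequality. Fix a compact $K \subset H\times H$. By Proposition~\ref{prop:yosida-graph-conv}, $A_\lambda \xrightarrow{g} A$ as $\lambda\to 0$. Since $A_\lambda$ is single-valued and $\frac1\lambda$-Lipschitz, its graph is closed, and $A$ is closed by Proposition~\ref{prop:maxmally-closed}. Proposition~\ref{prop:equiv-graph} therefore gives $d_{\mathrm{gph},K'}(A_\lambda,A)\to 0$ for every compact $K'$. For each $\lambda$, $A_\lambda$ is continuous on all of $H$, so Proposition~\ref{prop:EDA-uniform-conv} yields $\varphi^{\lambda}_m\in\mathcal F$ with $D_m\circ\varphi^\lambda_m\circ E_m \to A_\lambda$ locally uniformly as $m\to\infty$. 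A diagonal choice $\lambda=\lambda_k\to 0$ and $m\ge m_k$ will then produce the sequence $(\varphi_m)$.

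The key step is a comparison lemma between a single-valued operator $B$ and its uniform perturbation $C$ on bounded sets. For compact $K$, let $P_1K$ denote its (compact) projection onto the first factor and set $K_1 := (P_1 K)$. If $\sup_{x\in K_1}\|C(x)-B(x)\|\le\delta$, then:
\begin{itemize}
\item for $(x,C(x))\in K$, the point $(x,B(x))$ is within $\delta$, so $h_K(\mathrm{gph}\,C,\mathrm{gph}\,B)\le\delta$;
\item for $(x,B(x))\in K$, the point $(x,C(x))$ is within $\delta$ and lies in $K^\delta$, the closed $\delta$-neighbourhood of $K$.
\end{itemize}
Estimating $h_K(\mathrm{gph}\,A,\mathrm{gph}\,C)\le h_K(\mathrm{gph}\,A,\mathrm{gph}\,A_\lambda)+\delta$ is straightforward, since it only requires, for points of $\mathrm{gph}\,A\cap K$, a nearby point of $\mathrm{gph}\,A_\lambda$ and then the uniform closeness of $C$ to $A_\lambda$. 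However, that nearby point $(u,A_\lambda u)$ need not have $u\in P_1K$. I will handle this by using uniform convergence on a larger compact set. The set of points $u$ realizing near-infima for $(x,y)\in \mathrm{gph}A\cap K$ can be taken to be $J_\lambda(x+\lambda y)$, since $(J_\lambda(x+\lambda y), A_\lambda(x+\lambda y))\in\mathrm{gph}A$. A cleaner route is to note that the inner-limit part of graph convergence for Yosida is realized by $u=x+\lambda y$, with $A_\lambda(x+\lambda y)$ close to $y$ uniformly for $(x,y)$ in compact sets (this should follow from the appendix proof of Proposition~\ref{prop:yosida-graph-conv}). The relevant inputs $\{x+\lambda y:(x,y)\in K\}$ then form a compact set, on which I apply Proposition~\ref{prop:EDA-uniform-conv}.

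The other direction, $h_K(\mathrm{gph}\,C,\mathrm{gph}\,A)$, is the main obstacle. Take points $(x,C(x))\in K$. First, $(x,A_\lambda x)$ is within $\delta$ of such a point, hence lies in the compact set $K^\delta$ only after a compactness argument: $K^\delta$ is not compact in infinite dimensions. Instead I use that $x\in P_1K$ (compact) and $A_\lambda$ is continuous, so $\{(x,A_\lambda x):x\in P_1K\}$ is compact and contained in $\tilde K:=\{(x,A_\lambda x): x\in P_1K,\ \mathrm{dist}((x,A_\lambda x),K)\le\delta\}$. Then
\[
h_K(\mathrm{gph}\,C,\mathrm{gph}\,A)\le \delta+\sup_{(x,A_\lambda x)\in\tilde K}\mathrm{dist}\big((x,A_\lambda x),\mathrm{gph}A\big)\le \delta+\lambda\,\sup_{x\in P_1K,\ \|A_\lambda x\|\le R}\|A_\lambda x\|,
\]
using $(J_\lambda x, A_\lambda x)\in\mathrm{gph}A$ and $\|x-J_\lambda x\|=\lambda\|A_\lambda x\|$, where $R$ bounds the second components of $K$ plus $\delta$. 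This is at most $\delta+\lambda(R)$. Choosing $\lambda_k\to0$ first and then $m_k$ with $\delta\le\lambda_k$ on the relevant compact sets, and setting $\varphi_m:=\varphi^{\lambda_k}_m$ for $m_k\le m<m_{k+1}$ (arbitrary for $m<m_1$), gives $d_{\mathrm{gph},K}(A_m,A)\to0$.
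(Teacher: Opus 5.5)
Your proof is correct, and its skeleton matches the paper's. Both replace $A$ by $A_\lambda$, apply Proposition~\ref{prop:EDA-uniform-conv} on a compact set containing $\pi_1(K)$ and the points $x+\lambda y$, and finish with a diagonal choice of $\lambda$. Your treatment of $h_K(\mathrm{gph}(A),\mathrm{gph}(A_m))$ via $z=x+\lambda y$ is exactly the paper's. Note that $J_\lambda(x+\lambda y)=x$ when $y\in A(x)$, so $A_\lambda(x+\lambda y)=y$ holds exactly, not merely approximately.

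The genuine difference is in the other half, $h_K(\mathrm{gph}(A_m),\mathrm{gph}(A))$. The paper argues by contradiction:
\begin{enumerate}
\item it extracts a convergent subsequence of offending points $(x_m,A_m(x_m))\in K$;
\item it transfers this to $(x_m,A_{\lambda_m}(x_m))$ using uniform closeness on $\pi_1(K)$;
\item it invokes the outer-limit half of $A_{\lambda_m}\xrightarrow{g}A$.
\end{enumerate}
You instead use the pair $(J_\lambda x,A_\lambda x)\in\mathrm{gph}(A)$ together with $\|x-J_\lambda x\|=\lambda\|A_\lambda x\|\le\lambda(R_K+\delta)$. This gives the explicit bound $\delta+\lambda(R_K+\delta)$, which is the same estimate the paper uses only later, in the proof of Corollary~\ref{cor:main-quant}.

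What each route buys:
\begin{itemize}
\item Your route is more elementary and quantitative. Your final argument does not actually need Propositions~\ref{prop:yosida-graph-conv} and~\ref{prop:equiv-graph}, even though you invoke them.
\item The paper's argument for this half uses only that the continuous surrogates graph-converge to $A$. It would therefore apply to any such family, not just the Yosida one.
\end{itemize}

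Two smaller remarks. Your block-diagonal construction, with the $\lambda_k$-dependent compact sets $\pi_1(K)\cup\{x+\lambda_k y:(x,y)\in K\}$, is a slight simplification of the paper's single compact $K_*$ built from all $\mu_n$ at once. The detour through $K^\delta$ and $\tilde K$ is superfluous: the direct estimate needs no compactness at all.
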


This theorem states that any maximally monotone operator can be approximated by continuous encoder--decoder architectures in the sense of local graph convergence. 
The proof is based on using the Yosida approximation $A_\lambda$ (see Proposition~\ref{prop:yosida-graph-conv}), which is approximated by encoder--decoder architectures via Proposition~\ref{prop:EDA-uniform-conv}. 
We refer to Appendix~\ref{app:thm-main-1} for details. 
Moreover, using quantitative approximation results for neural networks applied to the Yosida approximation, we can obtain a quantitative error estimates. See Appendix~\ref{app:cor:main-quant}  for details.

\medskip

While the above encoder--decoder architectures do not, in general, preserve maximal monotonicity, 
we can establish the following structure-preserving approximation result inspired by the resolvent formulation of the Yosida approximation.

\begin{corollary}
\label{cor-main-2}
Let $\mathcal{F}^{\mathrm{Lip}}$ be a class of $1$-Lipschitz universal approximators, and assume that $H$ has the EDAP with $E_m$ and $D_m$. 
Let $A : D(A) \subset H \to 2^H$ be maximally monotone. Let $(\lambda_m)_{m\in\mathbb{N}} \subset \mathbb{R}_{+}$ with $\lambda_m \to 0$. 
Then, for any compact set $K \subset H \times H$, there exists sequences $(\psi_m)_{m\in\mathbb{N}} \subset \mathcal{F}^{\mathrm{Lip}}$ such that
\[
d_{\mathrm{gph},K}(B_m,A)\to0,
\quad 
B_m := \frac{I - D_m \circ \psi_m \circ E_m}{2\lambda_m}.
\]
Here, $D_m \circ \psi_m \circ E_m$ is a $1$-Lipschitz encoder--decoder architecture, and $B_m : H \to H$ is maximally monotone. 
\end{corollary}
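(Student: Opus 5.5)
The idea is to write the Yosida approximation through a nonexpansive map and approximate that map by a $1$-Lipschitz encoder--decoder architecture. Since $A_\lambda = (I-J_\lambda)/\lambda$ with $J_\lambda$ firmly nonexpansive, the reflected resolvent $R_\lambda := 2J_\lambda - I$ is nonexpansive, i.e.\ $1$-Lipschitz. One then checks directly that
\[
\frac{I - R_{\lambda}}{2\lambda} = \frac{2I - 2J_\lambda}{2\lambda} = A_\lambda .
\]
So with $\lambda=\lambda_m$ the target is $T_m := R_{\lambda_m}$. We want $1$-Lipschitz encoder--decoder architectures $N_m := D_m\circ\psi_m\circ E_m$ with $N_m \approx T_m$ uniformly on a suitable compact set.

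\textbf{Step 1 (structure).} Any operator $B = (I-N)/(2\lambda)$ with $N$ $1$-Lipschitz is maximally monotone. Indeed,
\[
\langle (x-N x)-(u-N u),x-u\rangle \ge \|x-u\|^2-\|Nx-Nu\|\|x-u\|\ge 0 ,
\]
so $B$ is monotone. It is also continuous and defined on all of $H$, and continuous monotone operators with full domain are maximally monotone. Thus $B_m$ is maximally monotone as soon as $N_m$ is $1$-Lipschitz. Proposition~\ref{prop:EDA-uniform-conv} provides this, since $E_m,D_m$ are $1$-Lipschitz.

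\textbf{Step 2 (diagonal approximation).} Fix a compact $K\subset H\times H$ and let $K_1$ be its projection onto the first factor. The convergence in Proposition~\ref{prop:EDA-uniform-conv} is for a fixed target, whereas here $T_m$ changes with $m$. I therefore argue diagonally. For each fixed $j$, Proposition~\ref{prop:EDA-uniform-conv} applied to the $1$-Lipschitz operator $R_{\lambda_j}$ gives a sequence in $m$ converging uniformly on $K_1$. Since $\lambda_j\to0$, $2\lambda_j$ is the scale we need to beat. For each $m$ I choose the largest admissible $j(m)\le m$ with error below $\lambda_{j(m)}/j(m)$, so that $j(m)\to\infty$.

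One can alternatively let the approximation index run independently of $\lambda$. The cleanest reading of the corollary is that we may pass to a subsequence or reindex. I would handle this by relabeling $\lambda_m$ to $\lambda_{j(m)}$ if necessary, or simply note that the construction yields
\[
\sup_{x\in K_1}\|N_m x - R_{\lambda_m}x\| \le \lambda_m\varepsilon_m, \qquad \varepsilon_m\to0 .
\]
Granting this, $\sup_{K_1}\|B_m - A_{\lambda_m}\|\le \varepsilon_m/2\to0$.

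\textbf{Step 3 (graph distance).} By Proposition~\ref{prop:yosida-graph-conv} and Proposition~\ref{prop:equiv-graph}, $d_{\mathrm{gph},K}(A_{\lambda_m},A)\to0$. I then compare $\mathrm{gph}(B_m)$ with $\mathrm{gph}(A_{\lambda_m})$ through the triangle inequality for $h_K$, treating the two directions separately.

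For $h_K(\mathrm{gph}B_m,\mathrm{gph}A)$, take points $(x,B_mx)\in K$. Then $(x,A_{\lambda_m}x)$ is within $\varepsilon_m/2$, but it may lie outside $K$. I handle this by replacing $K$ with a slightly enlarged compact set, namely $K' = K + (\{0\}\times \overline{B}_{\text{small}})$. This set is not compact in infinite dimensions, so instead I use the outer-limit argument directly. Suppose $(x_m,B_mx_m)\in K$ stay at distance at least $\delta$ from $\mathrm{gph}A$. By compactness, a subsequence converges, and then $(x_m,A_{\lambda_m}x_m)$ converges to the same limit. The outer limit condition then puts the limit in $\mathrm{gph}A$, a contradiction.

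For $h_K(\mathrm{gph}A,\mathrm{gph}B_m)$, take $(x,y)\in\mathrm{gph}A\cap K$. There are points $(u,A_{\lambda_m}u)$ close to $(x,y)$, and moving to $(u,B_mu)$ costs at most $\varepsilon_m/2$, provided $u$ stays in a compact set on which the uniform estimate holds. This requires care because the points $u$ come from the inner limit. A natural choice is $u=x+\lambda_m y$, for which $J_{\lambda_m}u=x$ and $A_{\lambda_m}u=y$ exactly. These points lie in the compact set
\[
K_1^\ast := \{x+t y : (x,y)\in K,\ t\in[0,\sup_m\lambda_m]\},
\]
so I perform the uniform approximation of Step 2 on $K_1^\ast\cup K_1$ from the start.

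The main obstacle is the interplay between the $m$-dependent targets $R_{\lambda_m}$ in Step 2 and the compactness bookkeeping in Step 3. I expect the diagonal/reindexing argument to be the delicate point.
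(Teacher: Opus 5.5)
Your argument is the paper's proof almost line for line: the reflected resolvent $R_\lambda=2J_\lambda-I$ with $A_\lambda=(I-R_\lambda)/(2\lambda)$, the enlarged compact set $\pi_1(K)\cup\{x+ty\}$, the compactness/outer-limit argument for $h_K(\mathrm{gph}(B_m),\mathrm{gph}(A))$, the points $z_m=x+\lambda_m y$ with $A_{\lambda_m}z_m=y$ for the other direction, and the Cauchy--Schwarz proof of (maximal) monotonicity. Those parts are fine.

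The genuine gap is the one you flagged in Step~2, and ``granting'' it does not close it. Everything rests on
\[
\sup_{u\in K_*}\|D_m\circ\psi_m\circ E_m(u)-R_{\lambda_m}(u)\|_H=o(\lambda_m)
\]
for the \emph{prescribed} $(\lambda_m)$ and the \emph{given} pairs $(E_m,D_m)$. A diagonal argument only gives this for a reindexed sequence $\lambda_{j(m)}$ with $j(m)\to\infty$ slowly. With that relabeling your proof is complete, but it proves a weaker statement: the scale in $B_m$ is chosen by the construction (exactly as $\lambda_m:=\mu_{n(m)}$ in the proof of Theorem~\ref{thm-main-1}), rather than prescribed as in the corollary.

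For a prescribed sequence the bound is false in general. Take $H=\ell^2$ and $A=0$, so $R_\lambda=I$. Let $E_m,D_m$ be the coordinate map and synthesis for $V_m=\mathrm{span}\{e_1,\dots,e_m\}$, let $u_0=\sum_k (k(k+1))^{-1/2}e_k$, $K=\{(u_0,0)\}$ and $\lambda_m=1/m$. Every $D_m\circ\psi\circ E_m$ takes values in $V_m$, so its error at $u_0\in K_*$ is at least $\|P_{V_m^\perp}u_0\|=(m+1)^{-1/2}$, which is not $o(1/m)$. Equivalently, $\|B_m(u_0+\lambda_m\cdot 0)\|\ge \tfrac{m}{2\sqrt{m+1}}\to\infty$ for every admissible $\psi_m$, so your inner-limit step fails.

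The paper's proof makes the same move (``by the diagonal argument \ldots $=o(\lambda_m)$'') without further justification. So you are not missing an idea the paper supplies, and your suspicion is correct. To get a sound result you have two options:
\begin{itemize}
\item make $(\lambda_m)$ part of the conclusion, which is your reindexed version; or
\item assume that the encoder--decoder error $\sup_u\|u-D_m\circ E_m(u)\|_H$ on the relevant compact sets is $o(\lambda_m)$, in which case $\psi_m\approx E_m\circ R_{\lambda_m}\circ D_m$ (which is $1$-Lipschitz) yields the needed bound.
\end{itemize}
In the example above the conclusion itself can still be met: take $\psi_m\approx\mathrm{id}$ and use the inner points $P_{V_m}u_0$ instead of $u_0$. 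So what breaks for fast-decaying prescribed $\lambda_m$ is the uniform-approximation strategy, not necessarily the statement.
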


The key idea is based on the resolvent representation of the Yosida approximation
$A_\lambda = \frac{I - (2J_\lambda - I)}{2\lambda}$. Here, $2J_\lambda - I$ is non-expansive (i.e., $1$-Lipschitz). 
We approximate this mapping by $1$-Lipschitz encoder--decoder architectures, which yields a sequence of maximally monotone operators by construction. We refer to Appendix~\ref{app:cor-main-2} for details.


\section{Experiments}
\label{sec:experiments}

We study operator learning for the following two differential operators:
\begin{align*}
& A_1 : u(t) \mapsto u'(t),
\qquad t \in [0,1],
\\
&
A_2 : u(x,y) \mapsto 
-\operatorname{div}\bigl(|\nabla u(x,y)|^{p-2}\nabla u(x,y)\bigr),
\qquad (x,y) \in [0,1]^2.
\end{align*}
Here, \(A_1\) is a prototypical unbounded differential operator,
while \(A_2\) is a nonlinear monotone operator of \(p\)-Laplacian type.
We use synthetic Fourier-series data generated according to
\eqref{eq:ex-generate-1} and~\eqref{eq:ex-generate-2},
inspired by Examples~\ref{ex:uniform-conv}
and~\ref{ex:Lp-conv}.
These examples show that differential operators cannot, in general,
be approximated under uniform or \(L^p\) convergence.
In particular, high-frequency inputs are mapped to outputs with
significantly amplified amplitudes, reflecting the unbounded nature
of differential operators
(see Figures~\ref{fig:high-frequency-derivative}
and~\ref{fig:high-frequency-plaplacian}).

We compare four FNO-based models trained with different objectives:
$\ell^2$ loss, soft $\ell^\infty$ loss, soft graph loss,
and a structured model based on the resolvent parametrization
$
\hat{A}(u)=\frac{u-\hat{S}(u)}{\lambda},
$
together with non-expansive regularization inspired by
Corollary~\ref{cor-main-2}.
See \eqref{eq:ell2-loss}--\eqref{eq:structured-graph}.

We evaluate the models using the following metrics:
Test MSE, Mean Rel-L2, Worst Rel-L2, Test Graph,
Mono mean-viol, Mono worst, and Mono frac;
see \eqref{eq:test-mse}--\eqref{eq:mono-frac}.
The metrics Mono mean-viol, Mono worst, and Mono frac quantify
how often and how severely the monotonicity condition is violated,
and values close to zero indicate better preservation of monotonicity.
A perfectly monotone operator yields zero mean violation,
zero worst violation, and zero violation fraction.

See Appendix~\ref{app:details-experiment}
for full details of the experimental setup.

The results are summarized in Tables~\ref{tab:results-derivative-main} and ~\ref{tab:results-plap-main} where values are reported as mean over four random seeds (Tables~\ref{tab:results-derivative} and ~\ref{tab:results-plap} includes mean $\pm$ standard deviation).

\begin{table}[h]
\centering
\footnotesize
\setlength{\tabcolsep}{4pt}

\caption{Comparison of test evaluations for \(A_1(u)=u'\).}
\label{tab:results-derivative-main}

\begin{tabular}{lccccccc}
\toprule
Model
& Test MSE.
& Mean Rel-$\ell^2$.
& Worst Rel-$\ell^2$.
& Test Graph.
& Mono mean-viol.
& Mono worst.
& Mono frac.
\\
\midrule

$\ell^2$
& $1.1230$
& $0.0159$
& $0.0389$
& $7.7351$
& $20.5333$
& $-246.1772$
& $0.6375$
\\

$\ell^\infty$
& $2.3120$
& $0.0263$
& $0.0956$
& $14.4008$
& $43.5230$
& $-493.7024$
& $0.7730$
\\

graph
& $\mathbf{0.0421}$
& $\mathbf{3.368\times10^{-3}}$
& $\mathbf{0.0116}$
& $\mathbf{0.2790}$
& $1.8287$
& $-33.9634$
& $0.5284$
\\

graph + structure
& $0.1784$
& $7.934\times10^{-3}$
& $0.0170$
& $0.8975$
& $\mathbf{0.2506}$
& $\mathbf{-36.7055}$
& $\mathbf{0.0111}$
\\

\bottomrule
\end{tabular}
\end{table}

\begin{table}[h]
\centering
\footnotesize
\setlength{\tabcolsep}{4pt}

\caption{
Comparison of test evaluations for
\(
A_2(u)=
-\operatorname{div}(|\nabla u|^{p-2}\nabla u)
\).
}
\label{tab:results-plap-main}

\begin{tabular}{lccccccc}
\toprule
Model
& Test MSE.
& Mean Rel-$\ell^2$.
& Worst Rel-$\ell^2$.
& Test Graph.
& Mono mean-viol.
& Mono worst.
& Mono frac.
\\
\midrule

$\ell^2$
& $1.93\times10^{4}$
& $6.13\times10^{-1}$
& $6.88\times10^{-1}$
& $1.59\times10^{4}$
& $0.000$
& $0.000$
& $0.000$
\\

$\ell^\infty$
& $2.08\times10^{4}$
& $6.37\times10^{-1}$
& $7.23\times10^{-1}$
& $1.69\times10^{4}$
& $0.000$
& $0.000$
& $0.000$
\\

graph
& $\mathbf{1.51\times10^{4}}$
& $\mathbf{5.43\times10^{-1}}$
& $\mathbf{6.14\times10^{-1}}$
& $\mathbf{1.28\times10^{4}}$
& $0.000$
& $0.000$
& $0.000$
\\

graph + structure
& $3.83\times10^{4}$
& $8.71\times10^{-1}$
& $9.15\times10^{-1}$
& $2.46\times10^{4}$
& $0.000$
& $0.000$
& $0.000$
\\

\bottomrule
\end{tabular}
\end{table}

In both Tables~\ref{tab:results-derivative-main} and ~\ref{tab:results-plap-main}, the graph model achieves the best accuracy in Test MSE, Mean Rel-L2, Worst Rel-L2, and Test Graph.
In Table~\ref{tab:results-plap-main}, all models exhibit essentially zero monotonicity violations, indicating that the generated data already lies in a strongly monotone regime. On the other hands, in Tables~\ref{tab:results-derivative-main}, while $\ell^2$ and $\ell^\infty$ models exhibiting monotonicity violations, graph and graph + structure models reduce Mono mean-viol, Mono worst, and Mono frac. 
Meanwhile, the graph + structure model again demonstrates a trade-off between strict structure preservation and approximation accuracy, suggesting that strong nonexpansive regularization may restrict model expressivity.

\section{Conclusion}
\label{sec:conclusion}

We studied the universal approximation of maximally monotone operators from the viewpoint of graph convergence, rather than classical uniform or $L^p$ convergence. 

The limitations of our work and possible future directions are summarized as follows:

\begin{enumerate}[(a)]
    \item Theorem~\ref{thm-main-1} states that any maximally monotone operator can be approximated by continuous encoder--decoder architectures in the sense of local graph convergence. Note, however, that the encoder--decoder architectures constructed in the theorem depend on the compact set. Therefore, the theorem does not guarantee global graph convergence; see Proposition~\ref{prop:equiv-graph}. This is one limitation of our approach.
    \item Although Corollary~\ref{cor:main-quant} in Appendix~\ref{app:cor:main-quant} provides an upper bound on the local graph distance between the target operator and the encoder--decoder architecture, the bound is not fully explicit and is not uniform over compact sets (see Remark~\ref{rem:quantitative-cor}). Deriving sharper and more explicit quantitative estimates remains an important direction for future work.
    \item A limitation of the proposed graph-based loss is its computational cost. While standard losses such as $\ell^2$ or soft $\ell^\infty$ scale linearly as $\mathcal{O}(N)$ with respect to the number $N$ of samples, the soft graph loss requires pairwise comparisons across samples, resulting in quadratic complexity $\mathcal{O}(N^2)$. This quadratic scaling may limit practical scalability for high-resolution discretizations or large datasets. Developing computationally efficient approximations of graph-based distances remains an important direction for future work.
    \item While our theoretical results are established for set-valued operators, our experiments focus on the single-valued case. Extending the experimental study to genuinely set-valued operators is an important direction for future work. In such settings, classical notions such as uniform or $L^p$ approximation are not even well-defined, highlighting the essential role of graph-distance-based formulations. We believe that the proposed graph-based framework provides a natural approach for learning such operators.
\end{enumerate}

\section*{Acknowledgments}
TF and TY are supported by JSPS KAKENHI Grant Number JP24K16949, 25H01453, 25K15148, JST CREST JPMJCR24Q5, JST ASPIRE JPMJAP2329.
TF and YK acknowledge the support of the Royal Society (International Collaboration Award ICA{\textbackslash}R2{\textbackslash}252033). This work is also supported by the Horizon Europe, MSCA-SE project 101131557 (REMODEL.)




\bibliographystyle{plain}
\bibliography{ref}

\newpage

\appendix

\section{Proofs}

\subsection{Proof of Example~\ref{ex:uniform-conv}}
\label{app:ex:uniform-conv}
\begin{proof}
Suppose, for contradiction, there is a continuous $A_\varepsilon: D(A_\varepsilon) \subset H\to H$ with $K \subset D(A_\varepsilon)$ 
$$
\sup_{u\in K}\|A_\varepsilon(u)-A(u)\|_{L^2(0,1)} \leq \varepsilon.
$$
In particular, 
\[
\|A_\varepsilon(0)\|_{L^2(0,1)}
= \|A_\varepsilon(0) - A(0)\|_{L^2(0,1)} \leq \varepsilon.
\]
We observe that since $\|A v_n\|_{L^2(0,1)}=\frac{\pi}{\sqrt{2}}$
\begin{align*}
\sup_{u\in K}\|A_\varepsilon(u)-A(u)\|_{L^2(0,1)} 
& \geq 
\|A_\varepsilon(v_n)-A(v_n)\|_{L^2(0,1)} 
\\
&
\geq \| A(v_n) \|_{L^2(0,1)} - \|A_\varepsilon(v_n)\|_{L^2(0,1)}
= \frac{\pi}{\sqrt{2}} - \|A_\varepsilon(v_n)\|_{L^2(0,1)}.
\end{align*}
Taking the limit as $n \to \infty$, using $v_n \to 0$ in $L^2(0,1)$, and using the continuity of $A_\varepsilon$ 
\[
\varepsilon \geq \sup_{u\in K}\|A_\varepsilon(u)-A(u)\|_{L^2(0,1)} 
\geq \frac{\pi}{\sqrt{2}} - \|A_\varepsilon(0)\|_{L^2(0,1)}
\geq \frac{\pi}{\sqrt{2}} - 1 > 0
\]
This contradicts with the fact that $\varepsilon \in (0,1)$ can be arbitrary small. 
\end{proof}

\subsection{Proof of Proposition~\ref{prop:loc-uni-Lp}}
\label{app:prop:loc-uni-Lp}
\begin{proof}
Let \(K:=\mathrm{supp}(\pi)\). Since \(K\) is compact and
\(A_m \xrightarrow{\mathrm{loc.\ unif.}} A\), we have
\[
\sup_{u\in K}\|A_m(u)-A(u)\|_H \to 0 .
\]
Then
\[
\int_H \|A_m(u)-A(u)\|_H^p\,d\pi(u)
=
\int_K \|A_m(u)-A(u)\|_H^p\,d\pi(u),
\]
because \(\pi\) is supported on \(K\). Hence
\[
\int_H \|A_m(u)-A(u)\|_H^p\,d\pi(u)
\le
\left(\sup_{u\in K}\|A_m(u)-A(u)\|_H\right)^p \pi(K).
\]
Since \(\pi\) is a probability measure and \(\pi(K)=1\), we obtain
\[
\int_H \|A_m(u)-A(u)\|_H^p\,d\pi(u)
\le
\left(\sup_{u\in K}\|A_m(u)-A(u)\|_H\right)^p
\to 0.
\]
Therefore,
\[
A_m\xrightarrow{L^p(\pi;H)}A.
\]
\end{proof}

\subsection{Proof of Example~\ref{ex:Lp-conv}}
\label{app:ex:Lp-conv}

\begin{proof}

We first show that \(A\) is maximally monotone.
Indeed, for any \(u\in D(A)\)
\[
\langle Au,u\rangle_{L^2}
=
\int_0^1 u'(x)u(x)\,dx
=
\frac12\int_0^1 (u(x)^2)'dx
=
\frac12\{u(1)^2-u(0)^2\}
=0.
\]
Hence \(A\) is monotone.

\medskip

We next show maximality by Minty's theorem. It suffices to prove that 
\[
\operatorname{Ran}(I+A)=H.
\]
Let \(f\in H\). We solve
\[
u+u'=f,\qquad u(0)=u(1).
\]
The general solution is
\[
u(x)=e^{-x}
\left(
c+\int_0^x e^t f(t)\,dt
\right),
\]
where \(c\in\mathbb{R}\) is a constant. The periodicity condition gives
\[
c
=
e^{-1}
\left(
c+\int_0^1 e^t f(t)\,dt
\right),
\]
and hence
\[
c
=
\frac{1}{e-1}\int_0^1 e^t f(t)\,dt.
\]
Therefore there exists \(u\in H^1_{\mathrm{per}}(0,1)\) such that
\[
(I+A)u=f.
\]
Thus
\[
\operatorname{Ran}(I+A)=H.
\]
By Minty's theorem, \(A\) is maximally monotone.

\medskip


We finally prove the non-approximability statement. 
Suppose, for contradiction, that there exists a continuous operator
\(A_\varepsilon:D(A_\varepsilon)\subset L^2_{\mathrm{per}}(0,1)\to L^2_{\mathrm{per}}(0,1)\) with 
\(\operatorname{supp}(\pi)\subset D(A_\varepsilon)\) such that
\[
\int \|A_\varepsilon(u)-A(u)\|_H^p\,d\pi(u)\le \varepsilon .
\]
Since \(u_n\to 0\) in \(H\), and since \(0\in \operatorname{supp}(\pi)\subset D(A_\varepsilon)\), 
the continuity of \(A_\varepsilon\) implies
\[
A_\varepsilon(u_n)\to A_\varepsilon(0) \quad \text{in } H.
\]
Hence there exist \(M>0\) and \(N\in\mathbb N\) such that
\[
\|A_\varepsilon(u_n)\|_H\le M \qquad (n\ge N).
\]
On the other hand,
\[
\|Au_n\|_H=\sqrt{2n}\pi.
\]
Therefore, for sufficiently large \(n\),
\[
\|A_\varepsilon(u_n)-Au_n\|_H
\ge \|Au_n\|_H-\|A_\varepsilon(u_n)\|_H
\ge \frac{\sqrt{2n}\pi}{2}.
\]
Consequently,
\[
\begin{aligned}
\int \|A_\varepsilon(u)-A(u)\|_H^p\,d\pi(u)
&=
\sum_{n=2}^\infty w_n
\|A_\varepsilon(u_n)-A(u_n)\|_H^p \\
&\ge
\sum_{n=N}^\infty 
\frac{c}{n^{1+p/2}}
\left(\frac{\sqrt{2n}\pi}{2}\right)^p \\
&=
c\left(\frac{\sqrt{2}\pi}{2}\right)^p
\sum_{n=N}^\infty \frac1n
=
\infty,
\end{aligned}
\]
which contradicts the assumed bound.

\end{proof}

\subsection{Proof of Proposition~\ref{prop:loc-uni-graph}}
\label{app:prop:loc-uni-graph}

\begin{proof}
We verify the outer and inner limit conditions.

\medskip
\noindent
\textbf{Outer limit condition.}
Assume that
\[
(x_m,A_m(x_m))\to (x,y)
\qquad\text{in } H\times H.
\]


Since $x_m\to x$, the set
\[
K:=\{x\}\cup \{x_m: m\in\mathbb N\}
\]
is compact in $H$. 
By the local uniform convergence on $K$ 
(note that $ D(A) = D(A_m)$),
\[
\sup_{z\in K\cap D(A)\cap D(A_m)}\|A_m(z)-A(z)\|_H \to 0.
\]
In particular,
\[
\|A_m(x_m)-A(x_m)\|_H \to 0.
\]
Since also $A_m(x_m)\to y$, it follows that
\[
A(x_m)\to y.
\]
Because $x_m\to x$ and $A$ is closed, we conclude that $x \in D(A)$ and $y=A(x)$. Thus the outer limit condition holds.

\medskip
\noindent
\textbf{Inner limit condition.}
Let $x\in D(A)$ be arbitrary. Set
\[
x_m:=x \qquad (m\in\mathbb N).
\]
Then $x_m\to x$.
Since $\{x\}\subset D(A) (= D(A_m))$ is compact in $H$, the assumed local uniform convergence yields
\[
\|A_m(x)-A(x)\|_H\to 0.
\]
Hence
\[
A_m(x_m)=A_m(x)\to A(x),
\]
and therefore
\[
(x_m,A_m(x_m))\to (x,A(x)).
\]
So the inner limit condition holds.

Combining the two parts, we obtain
\[
A_m\xrightarrow{g}A.
\]
\end{proof}

\subsection{Proof of Example~\ref{ex:graph-conv}}
\label{app:ex:graph-conv}

\begin{proof}
First, \(A\) is monotone. Indeed, if \(x<y\), then every element of
\(A(x)\) is less than or equal to every element of \(A(y)\). Hence, for
\((x,u),(y,v)\in \operatorname{gph}(A)\),
\[
(u-v)(x-y)\ge 0.
\]
Thus \(A\) is monotone.

The graph of \(A\) is
\[
\operatorname{gph}(A)
=
(-\infty,0)\times\{0\}
\cup \{(0,0),(0,1)\}
\cup (0,\infty)\times\{1\},
\]
which is closed in \(\mathbb{R}^2\). Hence \(A\) is closed.

It is not maximally monotone, since it is properly contained in the monotone
operator
\[
\widetilde A(x):=
\begin{cases}
\{0\}, & x<0,\\[2mm]
[0,1], & x=0,\\[2mm]
\{1\}, & x>0.
\end{cases}
\]

Now suppose, for contradiction, that there exists a sequence of continuous
maps \(A_m:\mathbb{R}\to\mathbb{R}\) such that
\[
A_m\xrightarrow{g} A.
\]
Since \((0,0),(0,1)\in \operatorname{gph}(A)\), the inner condition of graph
convergence gives sequences \(x_m,y_m\in\mathbb{R}\) such that
\[
x_m\to 0,\qquad A_m(x_m)\to 0,
\]
and
\[
y_m\to 0,\qquad A_m(y_m)\to 1.
\]
For sufficiently large \(m\), we have
\[
A_m(x_m)<\frac12<A_m(y_m).
\]
Since \(A_m\) is continuous, the intermediate
value theorem implies that there exists \(z_m\) between \(x_m\) and \(y_m\)
such that
\[
A_m(z_m)=\frac12.
\]
Since \(x_m,y_m\to 0\), we also have \(z_m\to 0\). Therefore
\[
(z_m,A_m(z_m))\to \left(0,\frac12\right).
\]
By the outer condition of graph convergence, this implies
\[
\left(0,\frac12\right)\in \operatorname{gph}(A).
\]
But \(A(0)=\{0,1\}\), so \(\frac12\notin A(0)\), a contradiction.
Thus no such continuous sequence exists.
\end{proof}

\subsection{Proof of Proposition~\ref{prop:equiv-graph}}
\label{app:prop:equiv-graph}

\begin{proof}

{\bf ($\Rightarrow$) Graph convergence implies local distance convergence.}

Assume that $A_m\xrightarrow{g}A$. Suppose, to the contrary, that there exist
a compact set $K\subset H\times H$, a number $\varepsilon>0$, and a subsequence,
still denoted by $m$, such that
\[
d_{\mathrm{gph},K}(A_m,A) \not\to 0.
\]
Since
\[
d_{\mathrm{gph},K}(A_m,A)\ge \varepsilon
\]
along a subsequence, after passing to a further subsequence if necessary,
one of the following two alternatives holds for all \(m\):
\[
(A)\quad
h_K(\operatorname{gph}(A),\operatorname{gph}(A_m))\ge \varepsilon,
\]
or
\[
(B)\quad
h_K(\operatorname{gph}(A_m),\operatorname{gph}(A))\ge \varepsilon.
\]

\medskip

{\bf Case (A): contradiction with the inner condition.}

Assume (A). Then for each $m$ there exists $(u_m,y_m) \in \mathrm{gph}(A)\cap K$ such that 
\begin{equation}
\label{eq:caseA-1}
\inf_{(v,z) \in \mathrm{gph}(A_m)}
\|(u_m,y_m) - (v,z) \|_{H \times H}
\ge \varepsilon/2.
\end{equation}
Since $K$ is compact, passing to a further subsequence, we may assume that 
\[
(u_m,y_m) \to (u,y).
\]
Since $\mathrm{gph}(A)$ is closed, we have $(u,y) \in \mathrm{gph}(A)$.
By the inner condition of graph convergence, there exists
$(v_m, z_m)\in \mathrm{gph}(A_m)$ such that
\[
(v_m, z_m)\to (u,y).
\]
Hence
\[
\begin{aligned}
\inf_{(v,z) \in \mathrm{gph}(A_m)}
 \|(u_m,y_m) - (v,z) \|_{H \times H}
&
\le \|(u_m, y_m)-(v_m, z_m)\|_{H\times H}
\\
&
\le \|(u_m, y_m)-(u,y)\|_{H\times H}+\|(v_m, z_m)-(u,y)\|_{H\times H}
\to 0,
\end{aligned}
\]
which contradicts \eqref{eq:caseA-1}.

\medskip

{\bf Case (B): contradiction with the outer condition.}

Assume (B). Then for each $m$ there exists $(v_m,z_m) \in \mathrm{gph}(A_m)\cap K$ such that
\[
\inf_{(u,y) \in \mathrm{gph}(A)}
\|(u,y) - (v_m,z_m) \|_{H \times H}
\ge \varepsilon/2.
\]

Since $(v_m,z_m)\in K$ and $K$ is compact, there exists a subsequence (not relabeled) such that
\[
(v_m,z_m) \to (v,z).
\]
By the outer limit condition, $(v,z) \in \mathrm{gph}(A)$.
On the other hand, by continuity of the distance,
\[
\inf_{(u,y) \in \mathrm{gph}(A)}
\|(u,y) - (v,z) \|_{H\times H}
=
\lim_{m\to\infty}
\inf_{(u,y) \in \mathrm{gph}(A)}
\|(u,y) - (v_m,z_m) \|_{H\times H}
\ge \varepsilon/2,
\]
which contradicts $(v,z) \in \mathrm{gph}(A)$.

\medskip

Thus $d_{\mathrm{gph},K}(A_m,A) \to 0$.

\bigskip

{\bf ($\Leftarrow$) Local distance convergence implies graph convergence.}

Assume that for every compact $K \subset H\times H$
\[
\max\left\{ h_{K}(\mathrm{gph}(A),\mathrm{gph}(A_m)), h_{K}(\mathrm{gph}(A_m),\mathrm{gph}(A)) \right\} \to 0.
\]

\medskip

{\bf (Outer limit condition)}

Assume $(x_m,y_m) \in \mathrm{gph}(A_m)$ and
\[
(x_m,y_m) \to (x,y).
\]

Define a compact set
\[
K := \{(x,y)\} \cup \{(x_m,y_m)\}_{m\in\mathbb N}.
\]
Hence
\[
\inf_{(u,z) \in \mathrm{gph}(A)}
\|(u,z) - (x_m,y_m)\|_{H\times H}
\leq h_{K}(\mathrm{gph}(A_m),\mathrm{gph}(A))
\to 0.
\]

Passing to the limit yields $(x,y) \in \mathrm{gph}(A)$.

\medskip

{\bf (Inner limit condition)}

Let $(x,y) \in \mathrm{gph}(A)$.
Define a compact 
$$
K =\{ (x,y) \}
$$
Then
\[
h_{K}(\mathrm{gph}(A),\mathrm{gph}(A_m)) \to 0,
\]
so
\[
\inf_{(v,z) \in \mathrm{gph}(A_m)}
\|(x,y)-(v,z)\|_{H\times H} \to 0.
\]

Thus we can choose $(x_m,y_m)\in \mathrm{gph}(A_m)$ such that
\[
(x_m,y_m) \to (x,y).
\]

\medskip

Therefore $A_m \xrightarrow{g} A$.

\end{proof}



\subsection{Proof of Proposition~\ref{prop:yosida-graph-conv}}
\label{app:prop:maxmally-closed}

\begin{proof}

\medskip

By the same computation of \cite[Lem. 12.14]{rockafellar1998variational}, the Yosida approximation 
\[
A_\lambda = (\lambda I + A^{-1})^{-1}.
\]
Since \(A^{-1}\) is maximally monotone, it is closed; see \cite[Prop. 20.22]{bauschke2011convex}, then by Lemma~\ref{lem:perturbation-graph}
\[
\lambda I + A^{-1} \xrightarrow{g}A^{-1}
\]
By Lemma~\ref{lem:inverse-graph} that is the continuity of the inverse operation with respect to graph convergence, 
\[
A_\lambda = (\lambda I + A^{-1})^{-1} \xrightarrow{g} (A^{-1})^{-1} = A.
\]
\end{proof}

\begin{lemma}\label{lem:perturbation-graph}
Let $A: D(A) \subset H \to 2^H$ be closed. Then, 
\[
\lambda_m I + A \xrightarrow{g} A.
\]
as $\lambda_m \to 0$.
\end{lemma}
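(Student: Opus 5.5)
The plan is to verify the two conditions of Definition~\ref{def:Graph-convergence} directly. Write $B_m := \lambda_m I + A$, so that $D(B_m) = D(A)$ and
\[
\mathrm{gph}(B_m) = \{ (x, y + \lambda_m x) \mid (x,y) \in \mathrm{gph}(A) \}.
\]
In words, $\mathrm{gph}(B_m)$ is the image of $\mathrm{gph}(A)$ under the bounded linear bijection $T_m(x,y) := (x, y + \lambda_m x)$ of $H \times H$. Moreover $T_m \to I$ strongly, since $\|T_m(x,y) - (x,y)\|_{H\times H} = \lambda_m \|x\|_H$. Both conditions will follow from this observation. I will assume $\lambda_m \ge 0$ (or just $\lambda_m \to 0$ in $\mathbb{R}$); the sign is irrelevant to the argument.

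\emph{Inner limit condition.} Let $(x,y) \in \mathrm{gph}(A)$. I take the constant sequence $x_m := x$ and choose $y_m := y + \lambda_m x \in B_m(x)$. Then $(x_m, y_m) \in \mathrm{gph}(B_m)$, and
\[
\|(x_m,y_m) - (x,y)\|_{H\times H} = \lambda_m \|x\|_H \to 0.
\]

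\emph{Outer limit condition.} Suppose $(x_m, y_m) \in \mathrm{gph}(B_m)$ and $(x_m, y_m) \to (x,y)$. Then $y_m = z_m + \lambda_m x_m$ for some $z_m \in A(x_m)$. Since $(x_m)$ converges, it is bounded, so $\lambda_m x_m \to 0$ and hence $z_m = y_m - \lambda_m x_m \to y$. Thus $(x_m, z_m) \in \mathrm{gph}(A)$ with $(x_m, z_m) \to (x,y)$. Closedness of $A$ then gives $x \in D(A)$ and $y \in A(x)$, i.e. $(x,y) \in \mathrm{gph}(A)$. The same argument applies verbatim along any subsequence, which covers the subsequence formulation of the outer condition.

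The only step requiring care is the outer condition, where one must use boundedness of the convergent sequence $(x_m)$ to kill the perturbation term $\lambda_m x_m$. This is the sole place where closedness of $A$ enters, and it is elementary. There is no genuine obstacle here. The nontrivial part of Proposition~\ref{prop:yosida-graph-conv} lies instead in the subsequent inverse-continuity lemma (applied with $A^{-1}$, which is closed, in place of $A$).
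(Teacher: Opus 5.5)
Your proof is correct and matches the paper's argument essentially verbatim: the constant sequence $x_m = x$, $y_m = y + \lambda_m x$ for the inner condition, and for the outer condition writing $y_m = \lambda_m x_m + a_m$ with $a_m \in A(x_m)$, so that $a_m \to y$ and closedness of $\mathrm{gph}(A)$ gives $(x,y)\in\mathrm{gph}(A)$. Your explicit remark that boundedness of $(x_m)$ forces $\lambda_m x_m \to 0$ just spells out a step the paper leaves implicit.
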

\begin{proof}

\medskip\noindent
\textbf{Step 1: Outer limit condition.}
Assume that $x_m\to x$, $y_m\to y$ and $y_m\in(\lambda_m I + A)(x_m)$ for all $m$.
By the definition of $\lambda_m I + A$, we can choose $a_m\in A(x_m)$ such that
\[
y_m=\lambda_m x_m + a_m.
\]
Hence
\[
a_m = y_m - \lambda_m x_m \to y - 0 = y.
\]
Therefore $(x_m,a_m)\to(x,y)$ with $(x_m,a_m)\in\mathrm{gph}(A)$ for all $m$.
Since $A$ has a closed graph in $H\times H$,
and we conclude that $(x,y)\in\mathrm{gph}(A)$, i.e.\ $y\in A(x)$.

\medskip\noindent
\textbf{Step 2: Inner limit condition.}
Let $(x,y)\in\mathrm{gph}(A)$ be arbitrary. For each $m$, set
\[
x_m:=x,\qquad y_m:=\lambda_m x + y.
\]
Then clearly $x_m\to x$ and $y_m\to y$ as $m\to\infty$. Moreover, since $y\in A(x)$,
we have $y_m=\lambda_m x + y\in \lambda_m x + A(x) = (\lambda_m I + A)(x_m)$.
Thus $(x_m,y_m)\in\mathrm{gph}(\lambda_m I + A)$ for all $m$ and $(x_m,y_m)\to(x,y)$.

\end{proof}

\begin{lemma}\label{lem:inverse-graph}
Let $A_m : D(A_m) \subset H \to 2^H$ and $A \subset D(A) :  H \to 2^H$  such that $A_m \xrightarrow{g} A$. Then, 
\[
A_m^{-1} \xrightarrow{g} A^{-1}.
\]
\end{lemma}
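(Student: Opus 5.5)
The plan is to reduce the statement to the observation that inversion corresponds to the swap map on $H\times H$. Define $S:H\times H\to H\times H$ by $S(x,y):=(y,x)$. By definition of the inverse of a set-valued operator, $y\in A^{-1}(x)$ if and only if $x\in A(y)$, so
\[
\mathrm{gph}(A^{-1}) = S(\mathrm{gph}(A)), \qquad \mathrm{gph}(A_m^{-1}) = S(\mathrm{gph}(A_m)).
\]
The map $S$ is a linear isometric involution of $H\times H$: $\|S(x,y)\|_{H\times H}=\|(x,y)\|_{H\times H}$ and $S\circ S=I$. Hence $S$ is a homeomorphism preserving convergent sequences and their limits in both directions. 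Both conditions of Definition~\ref{def:Graph-convergence} are phrased purely in terms of sequences in graphs and their limits, so they should transfer verbatim.

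\textbf{Outer limit condition.} Take $(x_m,y_m)\in\mathrm{gph}(A_m^{-1})$ with $(x_m,y_m)\to(x,y)$. Then $(y_m,x_m)=S(x_m,y_m)\in\mathrm{gph}(A_m)$, and by continuity of $S$ we have $(y_m,x_m)\to(y,x)$. The outer condition for $A_m\xrightarrow{g}A$ gives $(y,x)\in\mathrm{gph}(A)$, that is, $(x,y)\in\mathrm{gph}(A^{-1})$. The subsequence version of the outer condition follows in exactly the same way.

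\textbf{Inner limit condition.} Take $(x,y)\in\mathrm{gph}(A^{-1})$. Then $(y,x)\in\mathrm{gph}(A)$, and the inner condition for $A_m$ produces $(u_m,v_m)\in\mathrm{gph}(A_m)$ with $(u_m,v_m)\to(y,x)$. Setting $(x_m,y_m):=(v_m,u_m)\in\mathrm{gph}(A_m^{-1})$, we obtain $(x_m,y_m)\to(x,y)$.

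There is no substantive obstacle here. The only point needing care is the convention that $A^{-1}$ is the set-valued inverse, defined through the graph with $D(A^{-1})=R(A)$, and not a single-valued inverse. I will state this convention explicitly at the start. No closedness or monotonicity assumption is needed.
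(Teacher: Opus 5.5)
Your proposal is correct and matches the paper's proof: both verify the outer and inner limit conditions by swapping coordinates, so that sequences in $\mathrm{gph}(A_m^{-1})$ and $\mathrm{gph}(A^{-1})$ correspond exactly to sequences in $\mathrm{gph}(A_m)$ and $\mathrm{gph}(A)$. Writing the swap as the isometric involution $S$ is only a notational repackaging of the same argument.
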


\begin{proof}

\medskip\noindent
\textbf{Step 1: Outer limit condition for $A_m^{-1}$.}
Assume that $(y_m,x_m)\in\mathrm{gph}(A_m^{-1})$ for all $m$ and
\[
(y_m,x_m)\to(y,x)\quad\text{in }H\times H.
\]
By definition of the inverse graph, $(x_m,y_m)\in\mathrm{gph}(A_m)$ for all $m$ and
\[
(x_m,y_m)\to(x,y).
\]
Applying the outer condition in $A_m \xrightarrow{g} A$ yields $(x,y)\in\mathrm{gph}(A)$, i.e.\ $y\in A(x)$.
Equivalently, $x\in A^{-1}(y)$, namely $(y,x)\in\mathrm{gph}(A^{-1})$.

\medskip\noindent
\textbf{Step 2: Inner limit condition for $A_m^{-1}$.}
Let $(y,x)\in\mathrm{gph}(A^{-1})$ be arbitrary. Then $(x,y)\in\mathrm{gph}(A)$.
By the inner condition in $A_m \xrightarrow{g} A$, there exist $(x_m,y_m)\in\mathrm{gph}(A_m)$ such that
\[
(x_m,y_m)\to(x,y).
\]
Then $(y_m,x_m)\in\mathrm{gph}(A_m^{-1})$ for all $m$, and
\[
(y_m,x_m)\to(y,x).
\]
This proves the inner limit condition.

\end{proof}

\subsection{Proof of Theorem~\ref{thm-main-1}}
\label{app:thm-main-1}

\begin{proof}
Let $(\mu_n)_{n\in\mathbb N}\subset(0,\infty)$ be such that
$\mu_n\to0$. Let $K\subset H\times H$ be compact. Set
\[
K_0:=\pi_1(K),
\]
where $\pi_1:H\times H\to H$ is the first projection. 
Set
\[
K_1:=\overline{\{x+\mu_n y:(x,y)\in \operatorname{gph}(A)\cap K,\ n\in\mathbb N\}},
\qquad
K_*:=K_0\cup K_1 .
\]
Then \(K_*\) is compact.

For each fixed \(n\), since \(A_{\mu_n}:H\to H\) is continuous,
Proposition~\ref{prop:EDA-uniform-conv} yields 
\[
\inf_{\varphi\in\mathcal F}
\sup_{u\in K_*}
\|D_m\circ\varphi\circ E_m(u)-A_{\mu_n}(u)\|_H
\to0
\qquad (m\to\infty).
\]
Hence, by a diagonal argument, there exists a map \(n(m)\to\infty\)
and \(\varphi_m\in\mathcal F\) such that
\[
\sup_{u\in K_*}
\|D_m\circ\varphi_m\circ E_m(u)-A_{\mu_{n(m)}}(u)\|_H
\to0.
\]
Set \(\lambda_m:=\mu_{n(m)}\). Then \(\lambda_m\to0\), and
\begin{equation}
\label{eq:Am-Yosida-unif}
\sup_{u\in K_*}
\|A_m(u)-A_{\lambda_m}(u)\|_H\to0.
\end{equation}

We prove that
\[
d_{\mathrm{gph},K}(A_m,A)\to0.
\]

\medskip

First, we show
\[
h_K(\mathrm{gph}(A_m),\mathrm{gph}(A))\to0.
\]
Suppose not. Then there exist $\varepsilon>0$, a subsequence, still denoted by $m$,
and points $x_m\in H$ such that
\[
(x_m,A_m(x_m))\in K
\]
and
\[
\mathrm{dist}\bigl((x_m,A_m(x_m)),\mathrm{gph}(A)\bigr)\geq\varepsilon.
\]
Since $K$ is compact, up to a further subsequence,
\[
(x_m,A_m(x_m))\to(x,y)\in K.
\]
In particular, $x_m\in K_0\subset K_*$. Hence, by \eqref{eq:Am-Yosida-unif},
\[
\|A_m(x_m)-A_{\lambda_m}(x_m)\|_H\to0.
\]
Therefore
\[
(x_m,A_{\lambda_m}(x_m))\to(x,y).
\]
Since $A_{\lambda_m}\xrightarrow{g}A$ as $\lambda_m\to0$, the outer limit condition gives
\[
(x,y)\in\mathrm{gph}(A).
\]
This contradicts
\[
\mathrm{dist}\bigl((x_m,A_m(x_m)),\mathrm{gph}(A)\bigr)\geq\varepsilon.
\]
Hence
\[
h_K(\mathrm{gph}(A_m),\mathrm{gph}(A))\to0.
\]

\medskip

Next, we show
\[
h_K(\mathrm{gph}(A),\mathrm{gph}(A_m))\to 0.
\]
Let $(x,y)\in \mathrm{gph}(A)\cap K$.
Define
\[
z_m:=x+\lambda_m y.
\]
Since $y\in A x$, we have
\[
z_m=x+\lambda_m y\in x+\lambda_m A x,
\]
and hence
\[
J_{\lambda_m}z_m=x.
\]
Therefore
\[
A_{\lambda_m}(z_m)
=
\frac{z_m-J_{\lambda_m}z_m}{\lambda_m}
=
\frac{x+\lambda_m y-x}{\lambda_m}
=
y.
\]
Moreover, by definition of $K_1$, we have $z_m\in K_1\subset K_*$. Hence, using
\eqref{eq:Am-Yosida-unif},
\[
\|A_m(z_m)-y\|_H
=
\|A_m(z_m)-A_{\lambda_m}(z_m)\|_H
\to 0
\]
uniformly for $(x,y)\in\mathrm{gph}(A)\cap K$.
Also,
\[
\|z_m-x\|_H=\lambda_m\|y\|_H\to0
\]
uniformly for $(x,y)\in K$, since $K$ is compact and therefore bounded.
Thus
\[
\mathrm{dist}\bigl((x,y),\mathrm{gph}(A_m)\bigr)
\leq
\|(x,y)-(z_m,A_m(z_m))\|_{H\times H}
\to0
\]
uniformly for $(x,y)\in\mathrm{gph}(A)\cap K$.
Hence
\[
h_K(\mathrm{gph}(A),\mathrm{gph}(A_m))\to0.
\]

Combining the two estimates gives
\[
d_{\mathrm{gph},K}(A_m,A)\to0.
\]
This completes the proof.
\end{proof}

\subsection{Proof of Corollary~\ref{cor-main-2}}
\label{app:cor-main-2}

\begin{proof}
Let
\[
J_\lambda:=(I+\lambda A)^{-1}
\]
be the resolvent of $A$, and define
\[
R_\lambda:=2J_\lambda-I.
\]
Since $A$ is maximally monotone, $J_\lambda$ is firmly nonexpansive. Hence
$R_\lambda$ is nonexpansive, namely,
\[
\|R_\lambda(x)-R_\lambda(y)\|_H
\le
\|x-y\|_H
\qquad
(x,y\in H).
\]
Moreover,
\[
A_\lambda
:=
\frac{I-J_\lambda}{\lambda}
=
\frac{I-R_\lambda}{2\lambda}.
\]
Fix a compact set $K\subset H\times H$.
Set
\[
K_0:=\pi_1(K),
\]
where $\pi_1:H\times H\to H$ denotes the first projection.

Define
\[
K_1
:=
\overline{
\{
x+\lambda_m y
:
(x,y)\in \operatorname{gph}(A)\cap K,\ m\in\mathbb N
\}
}.
\]
Since $C$ is compact and $\lambda_m\to0$, the set $K_1$ is compact.
Put
\[
K_*:=K_0\cup K_1.
\]
Then $K_*$ is compact.

Since each $R_{\lambda_m}:H\to H$ is $1$-Lipschitz,
the EDAP and the universality of $\mathcal F^{\mathrm{Lip}}$
yield a sequence
\[
(\psi_m)_{m\in\mathbb N}\subset\mathcal F^{\mathrm{Lip}}
\]
such that by the diagonal argument
\[
\sup_{u\in K_*}
\|
D_m\circ\psi_m\circ E_m(u)
-
R_{\lambda_m}(u)
\|_H
=
o(\lambda_m).
\]
Set
\[
S_m:=D_m\circ\psi_m\circ E_m,
\qquad
B_m:=\frac{I-S_m}{2\lambda_m}.
\]
Then
\[
S_m:H\to H
\]
is $1$-Lipschitz.
We see that 
\[
\begin{aligned}
\sup_{u\in K_*}
\|B_m(u)-A_{\lambda_m}(u)\|_H
&=
\sup_{u\in K_*}
\left\|
\frac{I-S_m}{2\lambda_m}(u)
-
\frac{I-R_{\lambda_m}}{2\lambda_m}(u)
\right\|_H
\\
&\le
\frac1{2\lambda_m}
\sup_{u\in K_*}
\|S_m(u)-R_{\lambda_m}(u)\|_H
\\
&\to0.
\end{aligned}
\]
Hence
\begin{equation}
\label{eq:Bm-Yosida}
\sup_{u\in K_*}
\|B_m(u)-A_{\lambda_m}(u)\|_H
\to0.
\end{equation}

We prove that
\[
d_{\mathrm{gph},K}(B_m,A)\to0.
\]

First, we show
\[
h_K(\operatorname{gph}(B_m),\operatorname{gph}(A))
\to0.
\]
Suppose not.
Then there exist $\varepsilon>0$, a subsequence still denoted by $m$,
and points $x_m\in H$ such that
\[
(x_m,B_m(x_m))\in K
\]
and
\[
\operatorname{dist}
\bigl(
(x_m,B_m(x_m)),
\operatorname{gph}(A)
\bigr)
\ge\varepsilon.
\]
Since $K$ is compact, after passing to a further subsequence,
\[
(x_m,B_m(x_m))\to(x,y)\in K.
\]
Since $x_m\in K_0\subset K_*$,
\eqref{eq:Bm-Yosida} implies
\[
\|B_m(x_m)-A_{\lambda_m}(x_m)\|_H\to0.
\]
Hence
\[
(x_m,A_{\lambda_m}(x_m))\to(x,y).
\]

Since $A_{\lambda_m}\xrightarrow{g}A$ as $\lambda_m\to0$,
the outer limit condition yields
\[
(x,y)\in\operatorname{gph}(A),
\]
which contradicts the assumption.
Therefore
\[
h_K(\operatorname{gph}(B_m),\operatorname{gph}(A))
\to0.
\]

Next, we show
\[
h_K(\operatorname{gph}(A),\operatorname{gph}(B_m))
\to0.
\]
Let
\[
(x,y)\in\operatorname{gph}(A)\cap K.
\]
Define
\[
z_m:=x+\lambda_m y.
\]
Since $y\in Ax$,
\[
z_m\in x+\lambda_m Ax,
\]
and therefore
\[
J_{\lambda_m}z_m=x.
\]
Hence
\[
R_{\lambda_m}(z_m)
=
2x-z_m
=
x-\lambda_m y.
\]
Consequently,
\[
A_{\lambda_m}(z_m)
=
\frac{z_m-J_{\lambda_m}z_m}{\lambda_m}
=
y.
\]

Since $z_m\in K_1\subset K_*$,
\eqref{eq:Bm-Yosida} gives
\[
\|B_m(z_m)-y\|_H
=
\|B_m(z_m)-A_{\lambda_m}(z_m)\|_H
\to0.
\]
Moreover,
\[
\|z_m-x\|_H
=
\lambda_m\|y\|_H
\to0
\]
uniformly for $(x,y)\in K$,
since $K$ is bounded.

Therefore
\[
\operatorname{dist}
\bigl(
(x,y),
\operatorname{gph}(B_m)
\bigr)
\le
\|(x,y)-(z_m,B_m(z_m))\|_{H\times H}
\to0
\]
uniformly for
\[
(x,y)\in\operatorname{gph}(A)\cap K.
\]
Hence
\[
h_K(\operatorname{gph}(A),\operatorname{gph}(B_m))
\to0.
\]

Combining the above two estimates,
\[
d_{\mathrm{gph},K}(B_m,A)\to0.
\]

Finally, we show that $B_m$ is maximally monotone.
Since $S_m$ is $1$-Lipschitz,
for any $x,y\in H$,
\[
\begin{aligned}
\langle B_mx-B_my,x-y\rangle
&=
\frac1{2\lambda_m}
\Bigl(
\|x-y\|_H^2
-
\langle S_mx-S_my,x-y\rangle
\Bigr)
\\
&\ge
\frac1{2\lambda_m}
\Bigl(
\|x-y\|_H^2
-
\|S_mx-S_my\|_H\|x-y\|_H
\Bigr)
\\
&\ge0.
\end{aligned}
\]
Thus $B_m$ is monotone.
Since $B_m:H\to H$ is continuous and everywhere defined,
it is maximally monotone.
\end{proof}

\section{Examples of maximally monotone operators}
\label{app:ex-max-mono}

We give some examples of maximally monotone operators. 

\begin{enumerate}[(a)]
    \item Subdifferentials of convex functionals $H \to \RI$ such as 
    \begin{itemize}
        \item the $p$-Laplacian, $1 < p < +\infty$,  $A : W^{1,p}(\Omega) \subset L^2(\Omega) \to L^2(\Omega)$ for a bounded open set $\Omega \subset \mathbb{R}^d$, defined as $A(u) = -\mathrm{div}(|\nabla u|^{p-2} \nabla u)$, which is a subgradient of the $p$-Dirichlet energy functional $J \colon L^2(\Omega) \to \RI$ given by
    \begin{equation*}
        J(u) =
        \begin{cases}
        \frac1p\int_\Omega \norm{\grad u}^p \, dx,  & u \in W^{1,p}(\Omega), \\
        +\infty,  & u \in L^2(\Omega) \setminus W^{1,p}(\Omega).
        \end{cases}
    \end{equation*}
    \item the $1$-Laplacian $A: \bv(\Omega)(\Omega) \subset L^2(\Omega) \to L^2(\Omega)$, where $\bv(\Omega)$ is the space of functions of bounded variation, formally written as $A(u) = -\div\left(\frac{\grad u}{\abs{\grad u}}\right)$ but rigorously defined as the subgradient of minimal norm of the Total Variation functional $J \colon L^2(\Omega) \to \RI$
    \begin{equation*}
        J(u) = 
        \begin{cases}
        \sup\left\lbrace \int_\Omega u\div \phi \d x \st \phi\in C^\infty_c(\Omega;\R^n),\;\sup_{\Omega}\abs{\phi}\leq 1 \right\rbrace,  & u \in \bv(\Omega), \\
        +\infty,  & u \in L^2(\Omega) \setminus \bv(\Omega).
        \end{cases}
    \end{equation*}
    \item the $\infty$-Laplacian~\cite{aronsson2004tour, juutinen1999infinity}, which can be formally written as $\Delta_\infty u:=\langle\grad u, D^2 u\grad u\rangle$ for sufficiently smooth functions but is rigorously defined in the viscosity sense; it is also a subgradient of the Lipschitz constant functional defined on the Banach space $C(\Omega)$
    \begin{equation*}
        J(u) = 
        \begin{cases}
        \norm{\grad u}_{\L^\infty},  & u \in W^{1,\infty}(\Omega), \\
        +\infty,  & u \in C(\Omega) \setminus W^{1,\infty}(\Omega)
        \end{cases}
    \end{equation*}
    and, similarly to the $p$-Laplacian, has a divergence form $A(u) = -\div(\tau\grad_{{\tau}}u)$ where $\tau$ is a certain measure that depends on $u$ and $\grad_{{\tau}}$ is the tangential gradient~\cite{bungert2022eigenvalue}. 
    \item the operator $u \mapsto \Delta (u^{[m]})$ where $m>0$ is a constant and $u^{[m]} := \sign(u)\abs{u}^{m}$, which arises in the porous medium ($m>1$) and the fast diffusion ($m<1$) equations; it is a subgradient of the following functional defined on the negative-order Sobolev space $H^{-1}(\Omega)$ (the dual of the Sobolev space $H^1_0(\Omega)$ with homogeneous Dirichlet boundary conditions)
    \begin{equation*}
    J(u) := 
    \begin{cases}
    \frac{1}{m+1}\int_\Omega\abs{u}^{m+1} \, d x, \quad&\text{if }u\in L^{m+1}(\Omega) \cap H^{-1}(\Omega),\\
    \infty,\quad & u \in H^{-1}(\Omega) \setminus L^{m+1}(\Omega).
    \end{cases}
    \end{equation*}
    \end{itemize}
    More details can be found, for example, in~\cite{bungert2020asymptotic, bungert2022gradient, bungert2025introduction}.
    \item The differentiation operator (the inverse Volterra operator~\cite{bauschke2010examples}) on periodic functions $A : H^1_{\mathrm{per}}(0,1) \subset L^2_{\mathrm{per}}(0,1) \to L^2_{\mathrm{per}}(0,1)$, $A(u)(x) = \frac{d}{dx}u(x)$ (which is not symmetric and therefore cannot be the subdifferential of a convex function). 
  %
  %
  \item The inverse $A^{-1}$ of a maximally monotone operator $A$ is maximally monotone; see \cite[Prop. 20.22]{bauschke2011convex}. Thus, the inverses of the above operators are also maximally monotone.
\end{enumerate}

\section{Quantitative result}
\label{app:cor:main-quant}

Using quantitative approximation results for neural networks applied to the Yosida approximation, we obtain the following.

\begin{corollary}[Quantitative local graph estimate]
\label{cor:main-quant}
Assume that $H$ has the EDAP with encoder and decoder mappings
$E_m:H\to \mathbb R^{w_m}$ and $D_m:\mathbb R^{w_m}\to H$.
Let $A:D(A)\subset H\to 2^H$ be maximally monotone and assume that
$0\in D(A)$. Let $K=K_1\times K_2\subset H\times H$ be compact with
$0\in K_1$. For $\lambda\in(0,1)$ set
\[
S_\lambda(K):=
K_1\cup \{x+\lambda y:(x,y)\in \operatorname{gph}(A)\cap K\}.
\]
Then, for every $\lambda\in(0,1)$ and $m\in\mathbb N$, there exists
a ReLU network $\varphi_{\lambda,m,N}:\mathbb R^{w_m}\to\mathbb R^{w_m}$
with size parameter $N\in\mathbb N$ such that, with
\[
B_{\lambda,m,N}:=
D_m\circ \varphi_{\lambda,m,N}\circ E_m,
\]
one has
\[
\begin{aligned}
d_{\mathrm{gph},K}(B_{\lambda,m,N},A)
&\le
\lambda R_K
+(1+\lambda)
\Bigg[
\frac{1}{\lambda}C_{S_\lambda(K)}(w_m)
+
C_{A_\lambda\circ D_m\circ E_m(S_\lambda(K))}(w_m)
\\
&\quad
+
C_{\mathrm{Yar}}(w_m)
\left(
\|A^0(0)\|_H
+
\frac{1}{\lambda}
\bigl(1+\operatorname{diam}(S_\lambda(K))\bigr)
\right)
\operatorname{diam}(S_\lambda(K))N^{-1/w_m}
\Bigg],
\end{aligned}
\]
where
\[
R_K:=\sup_{y\in K_2}\|y\|_H,
\qquad
C_B(w_m):=\sup_{u\in B}\|u-D_m\circ E_m(u)\|_H.
\]
\end{corollary}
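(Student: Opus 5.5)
The plan is to make the proof of Theorem~\ref{thm-main-1} quantitative. I route everything through the Yosida approximation $A_\lambda$ and split the error into three layers: graph error of $A_\lambda$ versus $A$, encoder--decoder projection error, and network approximation error on $\mathbb R^{w_m}$.

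\textbf{Step 1: reduce the graph distance to a sup-norm error on $S_\lambda(K)$.} Write $B=B_{\lambda,m,N}$ and set
\[
\delta:=\sup_{u\in S_\lambda(K)}\|B(u)-A_\lambda(u)\|_H .
\]
For $h_K(\mathrm{gph}(A),\mathrm{gph}(B))$, take $(x,y)\in\mathrm{gph}(A)\cap K$ and $z=x+\lambda y\in S_\lambda(K)$. Then $J_\lambda z=x$ and $A_\lambda z=y$, exactly as in Appendix~\ref{app:thm-main-1}. Hence
\[
\|(x,y)-(z,B z)\|\le \lambda\|y\|+\delta\le \lambda R_K+\delta .
\]
For $h_K(\mathrm{gph}(B),\mathrm{gph}(A))$, take $(x,Bx)\in K$, so $x\in K_1\subset S_\lambda(K)$. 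The point $(J_\lambda x, A_\lambda x)$ lies in $\mathrm{gph}(A)$, since $A_\lambda x\in A(J_\lambda x)$. Using $x-J_\lambda x=\lambda A_\lambda x$, the distance is at most
\[
\lambda\|A_\lambda x\|+\|Bx-A_\lambda x\|\le \lambda\|Bx\|+(1+\lambda)\delta\le \lambda R_K+(1+\lambda)\delta .
\]
Together the two bounds give $d_{\mathrm{gph},K}\le\lambda R_K+(1+\lambda)\delta$, which is the displayed structure of the estimate.

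\textbf{Step 2: bound $\delta$.} Choose $\varphi\approx E_m\circ A_\lambda\circ D_m$ on the compact set $E_m(S_\lambda(K))\subset\mathbb R^{w_m}$. Then split
\[
B-A_\lambda=\bigl(D_m\varphi E_m-D_mE_mA_\lambda D_mE_m\bigr)+\bigl(D_mE_mA_\lambda D_mE_m-A_\lambda D_mE_m\bigr)+\bigl(A_\lambda D_mE_m-A_\lambda\bigr).
\]
The third term is at most $\frac1\lambda C_{S_\lambda(K)}(w_m)$, because $A_\lambda$ is $\frac1\lambda$-Lipschitz. The second term is at most $C_{A_\lambda\circ D_m\circ E_m(S_\lambda(K))}(w_m)$. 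The first term is at most the sup error of $\varphi$ against $g:=E_m\circ A_\lambda\circ D_m$, since $D_m$ is 1-Lipschitz.

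\textbf{Step 3: the network approximation, which I expect to be the main obstacle.} The map $g$ is $\frac1\lambda$-Lipschitz on $\mathbb R^{w_m}$. I would invoke Yarotsky-type ReLU approximation of Lipschitz functions on a cube, with error $C_{\mathrm{Yar}}(w_m)\,\mathrm{Lip}\cdot\mathrm{scale}\cdot N^{-1/w_m}$ after rescaling the domain. The domain is contained in a ball of radius $\operatorname{diam}(S_\lambda(K))$ around $E_m(0)=0$; this is where $0\in K_1$ is used. Yarotsky's result is stated on the unit cube for functions with a sup-norm bound, so I also need $\sup|g|$ on that ball. I get it from
\[
\|A_\lambda u\|\le\|A_\lambda 0\|+\tfrac1\lambda\|u\|\le\|A^0(0)\|_H+\tfrac1\lambda\operatorname{diam},
\]
which uses $0\in D(A)$ and $\|A_\lambda 0\|\le\|A^0(0)\|_H$. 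Combining the sup bound with the Lipschitz bound, and rescaling by $\operatorname{diam}$, yields the factor
\[
\Bigl(\|A^0(0)\|_H+\tfrac1\lambda\bigl(1+\operatorname{diam}\bigr)\Bigr)\operatorname{diam}\,N^{-1/w_m}.
\]
The exact constants depend on the precise form of the quoted theorem, and I would absorb them into $C_{\mathrm{Yar}}(w_m)$. Substituting the bound on $\delta$ into Step 1 finishes the proof.
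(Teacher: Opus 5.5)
Your proposal is correct and follows the paper's proof essentially step for step: the same reduction $d_{\mathrm{gph},K}(B_{\lambda,m,N},A)\le\lambda R_K+(1+\lambda)\delta$ via the resolvent pairs $(J_\lambda x,A_\lambda x)$ and $z=x+\lambda y$, the same three-term split of $B_{\lambda,m,N}-A_\lambda$ on $S_\lambda(K)$, and the same Yarotsky step, which uses $\|A_\lambda(0)\|_H\le\|A^0(0)\|_H$ and $0\in K_1$ to bound the $W^{1,\infty}$ norm. Like the paper, you absorb the rescaling and logarithmic factors of Yarotsky's theorem into $C_{\mathrm{Yar}}(w_m)$; this is the one loose point in both arguments.
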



Corollary~\ref{cor:main-quant} highlights the trade-off between $\lambda$, $m$, and $N$. 
The first term  vanishes as $\lambda \to 0$, whereas the second and third terms and the fourth term may increase. 
On the other hand, for fixed $\lambda > 0$, the second and third terms vanish as $m \to \infty$. 
Moreover, for fixed $\lambda > 0$ and $m \in \mathbb{N}$, the fourth term vanishes as $N \to \infty$.

\begin{remark}
\label{rem:quantitative-cor}
We give several remarks on the upper bound estimates obtained in the above. 
\begin{enumerate}[(a)]
    \item Although we obtain an upper bound on the local graph distance between the target operator and the encoder--decoder architecture, some terms are not fully explicit. While the constant $C_{\mathrm{Yar}}(w_m)$ arising from the Yarotsky-type approximation theorem is generally not explicit, 
    the encoder--decoder errors $C_B(w_m)$ may admit explicit bounds under additional structural assumptions on 
    the encoder and decoder mappings $E_m, D_m$. 
    \item We emphasize that although $N$ does not depend on the compact, the distance $d_{\mathrm{gph},K}$, in which the rate is obtained, does depend on $K$ (see Definition~\ref{def:graph-distance}). Therefore, despite its appearance, this error estimate is not uniform over compact sets.
\end{enumerate}

\end{remark}

\begin{proof}
Set
\[
B_{\lambda,m,N}:=
D_m\circ \varphi_{\lambda,m,N}\circ E_m.
\]
Define
\[
\eta_{\lambda,m,N}
:=
\sup_{u\in S_\lambda(K)}
\|B_{\lambda,m,N}(u)-A_\lambda(u)\|_H .
\]
We claim that
\[
d_{\mathrm{gph},K}(B_{\lambda,m,N},A)
\le
\lambda R_K+(1+\lambda)\eta_{\lambda,m,N}.
\]

Indeed, let $(u,B_{\lambda,m,N}(u))\in \operatorname{gph}(B_{\lambda,m,N})\cap K$.
Set
\[
q:=A_\lambda(u),
\qquad
p:=J_\lambda u.
\]
Then $q\in A(p)$ and $u=p+\lambda q$. Hence $(p,q)\in \operatorname{gph}(A)$.
Moreover, since $B_{\lambda,m,N}(u)\in K_2$,
\[
\|q\|_H
\le
\|B_{\lambda,m,N}(u)\|_H+\|q-B_{\lambda,m,N}(u)\|_H
\le
R_K+\eta_{\lambda,m,N}.
\]
Therefore
\[
\begin{aligned}
\operatorname{dist}\bigl((u,B_{\lambda,m,N}(u)),\operatorname{gph}(A)\bigr)
&\le
\|(u,B_{\lambda,m,N}(u))-(p,q)\|_{H\times H}  \\
&\le
\lambda\|q\|_H+\|B_{\lambda,m,N}(u)-q\|_H  \\
&\le
\lambda R_K+(1+\lambda)\eta_{\lambda,m,N}.
\end{aligned}
\]
Taking the supremum over
$\operatorname{gph}(B_{\lambda,m,N})\cap K$ gives
\[
h_K(\operatorname{gph}(B_{\lambda,m,N}),\operatorname{gph}(A))
\le
\lambda R_K+(1+\lambda)\eta_{\lambda,m,N}.
\]

Conversely, let $(x,y)\in \operatorname{gph}(A)\cap K$ and set
\[
z:=x+\lambda y.
\]
Then $z\in S_\lambda(K)$ and, by the resolvent identity,
\[
J_\lambda z=x,
\qquad
A_\lambda(z)=y.
\]
Hence
\[
(z,B_{\lambda,m,N}(z))\in \operatorname{gph}(B_{\lambda,m,N}),
\]
and therefore
\[
\begin{aligned}
\operatorname{dist}\bigl((x,y),\operatorname{gph}(B_{\lambda,m,N})\bigr)
&\le
\|(x,y)-(z,B_{\lambda,m,N}(z))\|_{H\times H}  \\
&\le
\lambda\|y\|_H+\|y-B_{\lambda,m,N}(z)\|_H  \\
&\le
\lambda R_K+\eta_{\lambda,m,N}.
\end{aligned}
\]
Taking the supremum over $\operatorname{gph}(A)\cap K$ yields
\[
h_K(\operatorname{gph}(A),\operatorname{gph}(B_{\lambda,m,N}))
\le
\lambda R_K+\eta_{\lambda,m,N}.
\]
Combining the two one-sided estimates gives
\[
d_{\mathrm{gph},K}(B_{\lambda,m,N},A)
\le
\lambda R_K+(1+\lambda)\eta_{\lambda,m,N}.
\]

It remains to bound $\eta_{\lambda,m,N}$. For $u\in S_\lambda(K)$, we decompose
\[
\begin{aligned}
\|A_\lambda(u)-B_{\lambda,m,N}(u)\|_H
&\le
\|A_\lambda(u)-A_\lambda(D_mE_m u)\|_H  \\
&\quad
+\|A_\lambda(D_mE_m u)-D_mE_mA_\lambda(D_mE_m u)\|_H \\
&\quad
+\|D_mE_mA_\lambda(D_mE_m u)
      -D_m\varphi_{\lambda,m,N}E_m u\|_H .
\end{aligned}
\]
Since $A_\lambda$ is $1/\lambda$-Lipschitz, the first term is bounded by
\[
\frac{1}{\lambda}C_{S_\lambda(K)}(w_m).
\]
The second term is bounded by
\[
C_{A_\lambda\circ D_m\circ E_m(S_\lambda(K))}(w_m).
\]

For the third term, note that $E_m \circ A_\lambda \circ D_m$ is Lipschitz on $E_m(S_\lambda(K))$, and hence belongs to 
$W^{1,\infty}(E_m(S_\lambda(K)); \mathbb{R}^{w_m})$. 
By the Yarotsky-type ReLU approximation theorem \cite[Thm. 1]{yarotsky2017error}, 
there exists a ReLU network $\varphi_{\lambda,m,N} : \mathbb{R}^{w_m} \to \mathbb{R}^{w_m}$ such that
\[
\begin{aligned}
\sup_{z \in E_m(S_\lambda(K))}
\|E_m \circ A_\lambda \circ D_m(z) - \varphi_{\lambda,m,N}(z)\|
&\le
C_{\mathrm{Yar}}(w_m)
\|E_m \circ A_\lambda \circ D_m\|_{W^{1,\infty}(E_m(S_\lambda(K)); \mathbb{R}^{w_m})} \\
&\quad \times \mathrm{diam}(E_m(S_\lambda(K))) \, N^{-1/w_m}.
\end{aligned}
\]

Since $\mathrm{diam}(E_m(S_\lambda(K))) \le \mathrm{diam}(S_\lambda(K))$, we estimate (by $0 \in K_1$)
\[
\begin{aligned}
\|E_m \circ A_\lambda \circ D_m\|_{W^{1,\infty}}
&\le
\|E_m \circ A_\lambda \circ D_m\|_{L^\infty}
+ \mathrm{Lip}(E_m \circ A_\lambda \circ D_m) \\
&\le
\|A^0(0)\|_H + \frac{1}{\lambda} \mathrm{diam}(S_\lambda(K)) + \frac{1}{\lambda}.
\end{aligned}
\]

Thus,
\[
\begin{aligned}
\sup_{z \in E_m(S_\lambda(K))}
&\|E_m \circ A_\lambda \circ D_m(z) - \varphi_{\lambda,m,N}(z)\|
\\
&
\le
C_{\mathrm{Yar}}(w_m)
\left(\|A^0(0)\|_H + \frac{1}{\lambda}(1 + \mathrm{diam}(S_\lambda(K)))\right)
\mathrm{diam}(S_\lambda(K_1))\, N^{-1/w_m}.
\end{aligned}
\]

Since $D_m$ is $1$-Lipschitz, the same bound applies after composing with $D_m$.
Substituting these estimates into the deterministic local graph estimate proves the result.
\end{proof}

\section{Details of Experiments}
\label{app:details-experiment}

We consider the following two target operators:
\begin{align*}
& A_1 : u(t) \mapsto u'(t),
\\
&
A_2 : u(x,y) \mapsto 
-\operatorname{div}\bigl(|\nabla u(x,y)|^{p-2}\nabla u(x,y)\bigr),
\end{align*}
defined on $[0,1]$ and $[0,1]^2$, respectively. We generate input data as
\begin{align}
\label{eq:ex-generate-1}
& u(t)
=
\sum_{j=1}^{K}
\frac{1}{n_j^{\beta}}
\Bigl(
a_j \sin(2\pi n_j t)
+
b_j \cos(2\pi n_j t)
\Bigr),
\\
\label{eq:ex-generate-2}
& u(x,y)
=
\sum_{j=1}^{K}
\frac{1}{(\sqrt{k_j^2+\ell_j^2})^\beta}
\Bigl[
a_j \sin(2\pi k_j x)\sin(2\pi \ell_j y)
+
b_j \sin(2\pi k_j x)\cos(2\pi \ell_j y)
\\
&
\qquad\qquad
+
c_j \cos(2\pi k_j x)\sin(2\pi \ell_j y)
+
d_j \cos(2\pi k_j x)\cos(2\pi \ell_j y)
\Bigr].
\nonumber
\end{align}
Here,
\[
n_j \sim \mathrm{Uniform}\{n_{\min}^{\mathrm{train}},\dots,n_{\max}^{\mathrm{train}}\}
\]
for the training data, and
\[
n_j \sim \mathrm{Uniform}\{n_{\min}^{\mathrm{test}},\dots,n_{\max}^{\mathrm{test}}\}
\]
for the test data. Moreover,
\[
a_j,b_j,c_j,d_j \sim \mathcal{N}(0,1)
\]
are independent random variables. The training and test datasets are constructed as
\[
\{(u_j,A_k(u_j))\}_{j=1}^{N^{\mathrm{train}}}
\quad\text{and}\quad
\{(u_j,A_k(u_j))\}_{j=1}^{N^{\mathrm{test}}},
\]
respectively, where each $u_j$ is sampled according to \eqref{eq:ex-generate-1} or \eqref{eq:ex-generate-2}, and where
$N^{\mathrm{train}}$ and $N^{\mathrm{test}}$ denote the numbers of training and test samples.

This data generation is inspired by Examples~\ref{ex:uniform-conv}
and~\ref{ex:Lp-conv}, which show that differential operators cannot,
in general, be approximated under uniform or \(L^p\) convergence,
thereby motivating the use of graph-distance-based losses.
In particular, high-frequency inputs of the form
\eqref{eq:ex-generate-1} and \eqref{eq:ex-generate-2} are mapped to
outputs with significantly amplified amplitudes, reflecting the
unbounded nature of differential operators (see
Figures~\ref{fig:high-frequency-derivative}
and~\ref{fig:high-frequency-plaplacian}).

\begin{figure}[h]
\centering

\begin{minipage}[t]{0.49\linewidth}
    \centering
    \includegraphics[width=\linewidth]{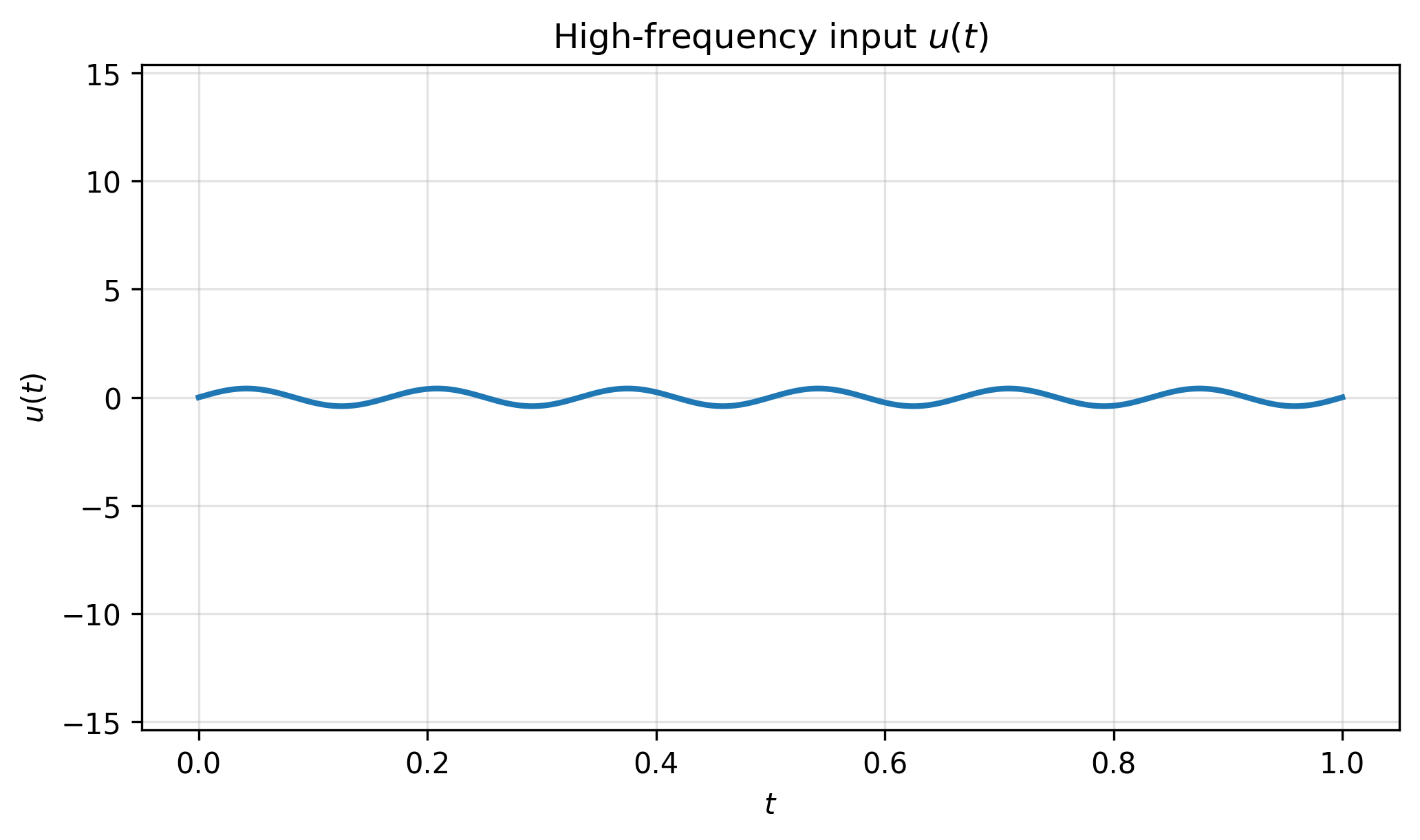}
\end{minipage}
\hfill
\begin{minipage}[t]{0.49\linewidth}
    \centering
    \includegraphics[width=\linewidth]{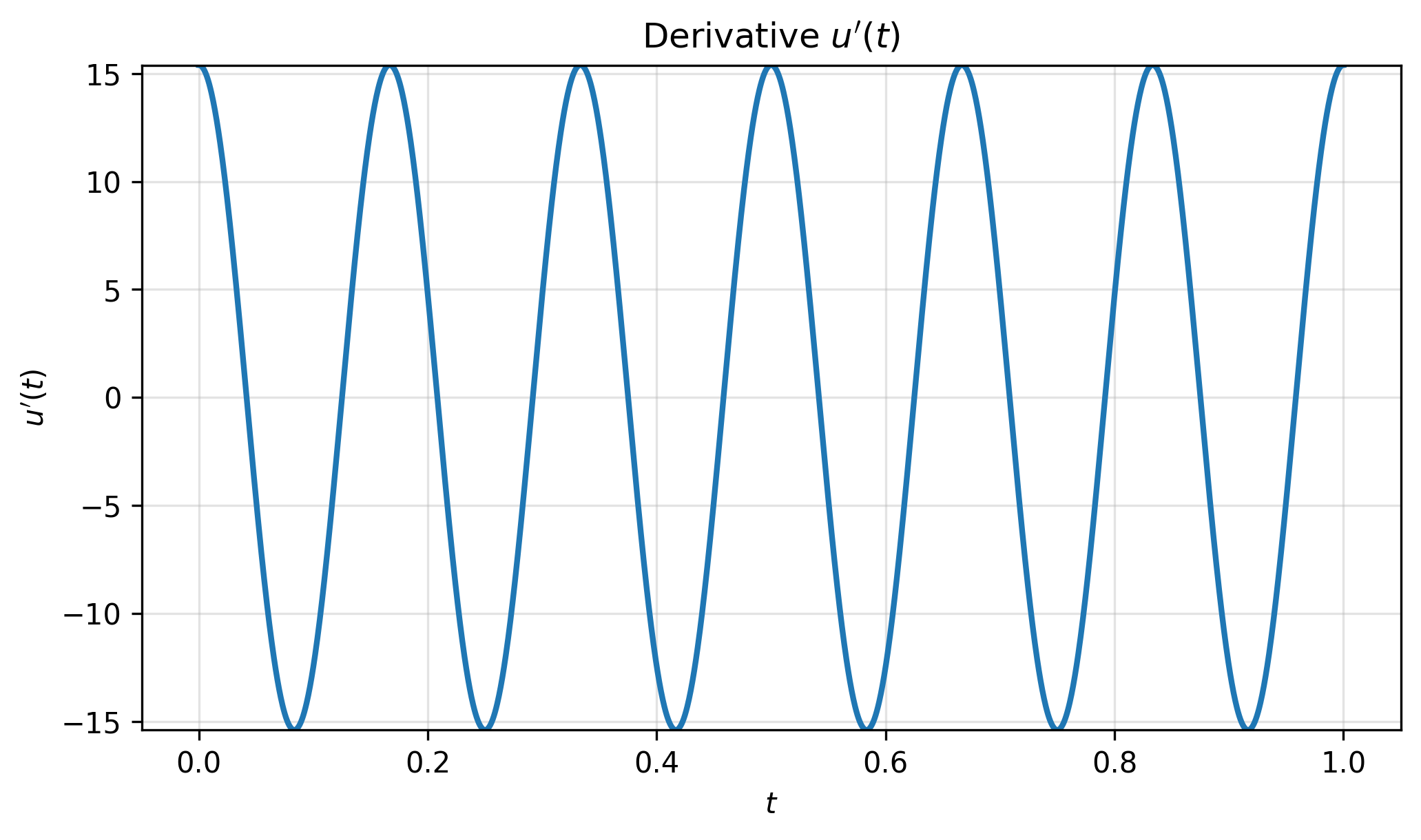}
\end{minipage}

\caption{
High-frequency input \(u(t)\) (left) and its derivative \(u'(t)\) (right). Here $u(t)$ is generated as in \eqref{eq:ex-generate-1} with $K=1$, $n=6$, $a_j=1$, $b_j=0$, and $\beta=0.5$. 
}
\label{fig:high-frequency-derivative}
\end{figure}

\begin{figure}[h]
\centering

\begin{minipage}[t]{0.49\linewidth}
    \centering
    \includegraphics[width=\linewidth]{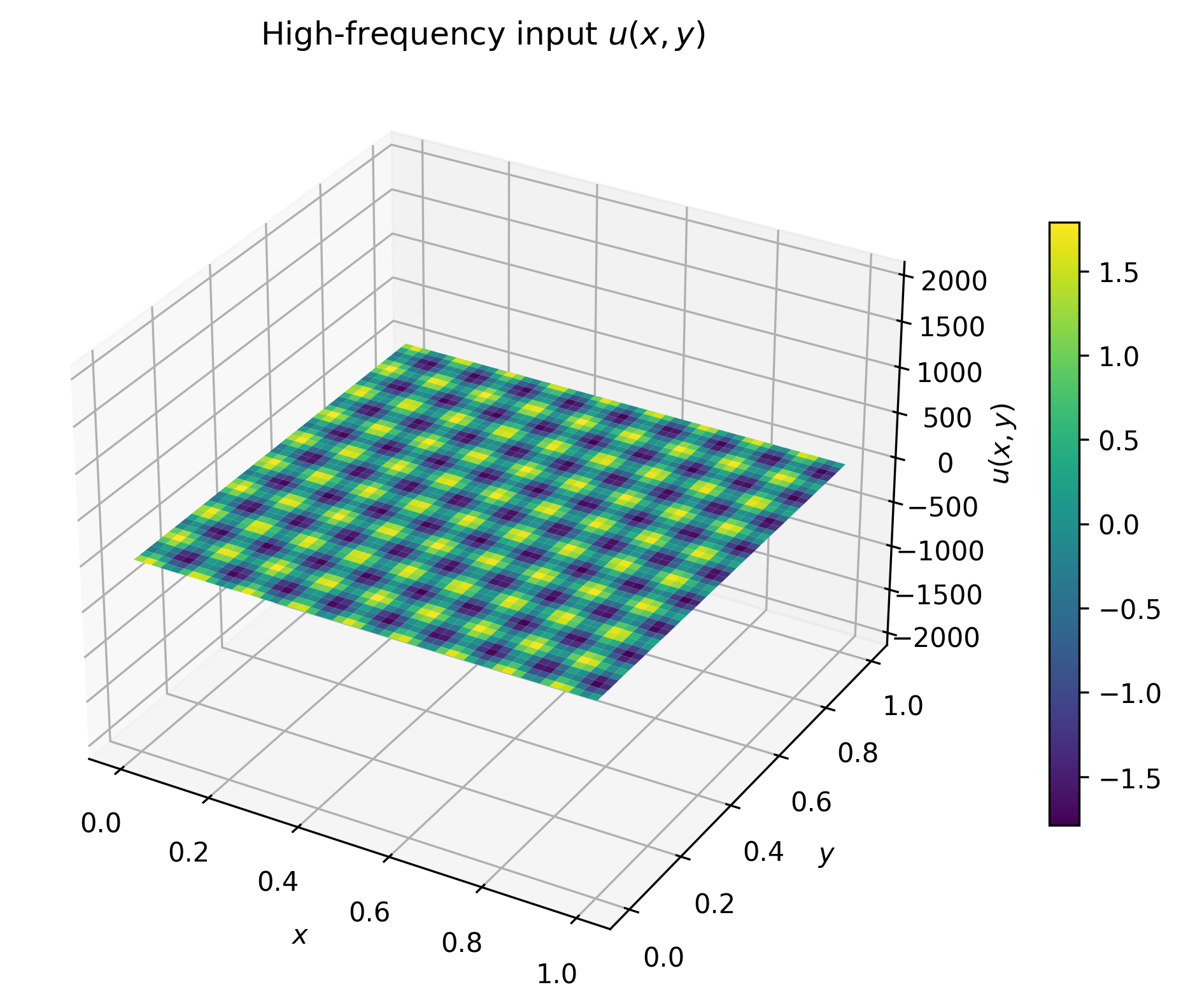}
\end{minipage}
\hfill
\begin{minipage}[t]{0.49\linewidth}
    \centering
    \includegraphics[width=\linewidth]{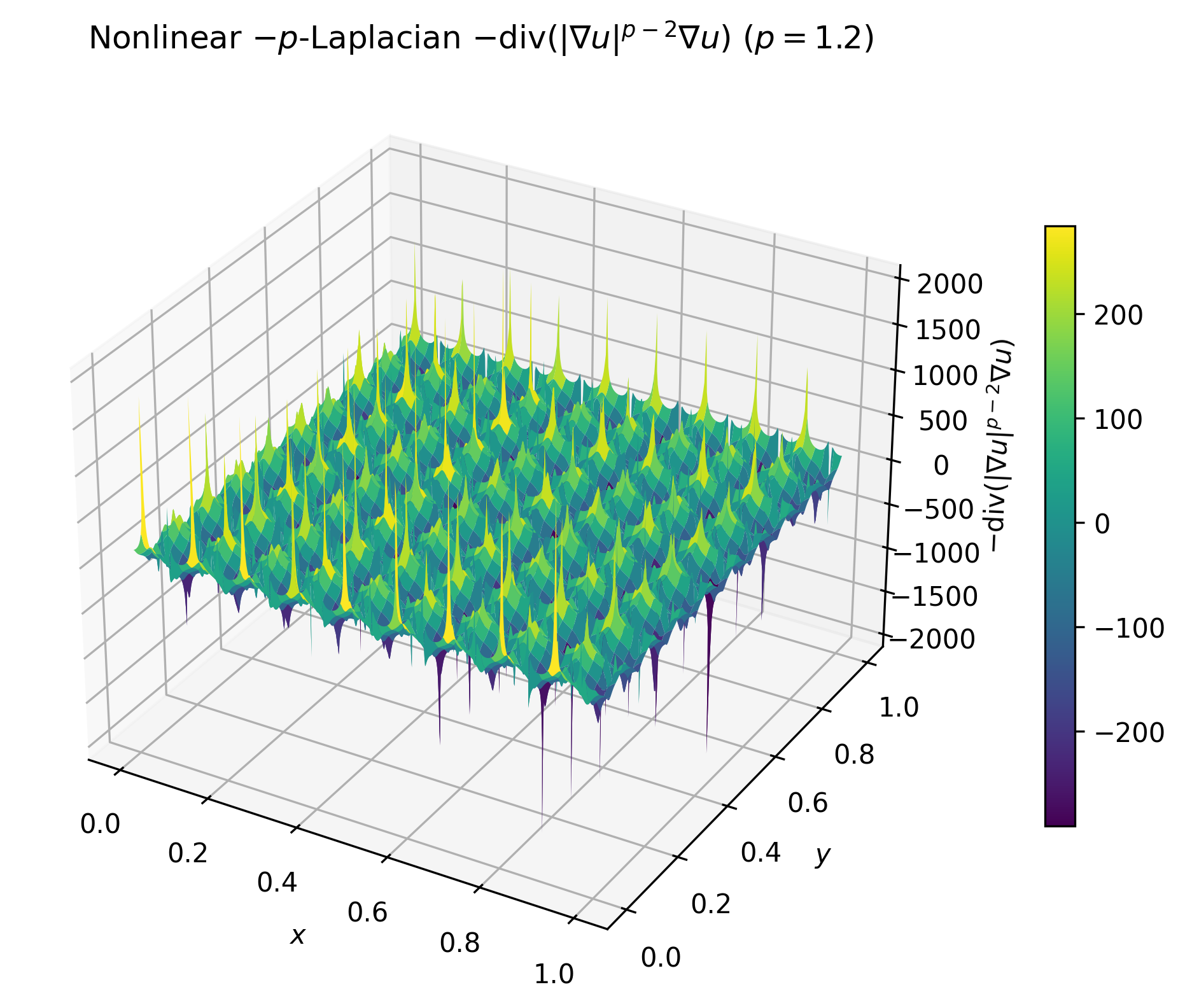}
\end{minipage}

\caption{
High-frequency input \(u(x,y)\) (left) and the corresponding nonlinear \(p\)-Laplacian
\(
-\mathrm{div}(|\nabla u|^{p-2}\nabla u)
\)
with \(p=1.2\) (right). Here $u(x,y)$ is generated as in \eqref{eq:ex-generate-2} with $K=1$, $n=9$, $a_j=1$, $b_j=1$, and $\beta=0$. 
}
\label{fig:high-frequency-plaplacian}
\end{figure}


We compare several models $\hat{A}$ based on Fourier Neural Operators (FNOs), trained with different loss functions:
\begin{equation}
\label{eq:ell2-loss}
\sum_{j=1}^{N^{\rm train}}
\| A_k(u_j) - \hat{A}(u_j) \|_{L^2}^2
\qquad
\text{($\ell^2$ loss)},
\end{equation}
\begin{equation}
\label{eq:soft-ell-loss}
\max_{j \in [N^{\rm train}]}
\| A_k(u_j) - \hat{A}(u_j) \|_{L^2}
\;\approx\;
\tau_{\infty}
\log\!\left(
\sum_{j=1}^{N^{\rm train}}
\exp\!\left(
\frac{
\| A_k(u_j) - \hat{A}(u_j) \|_{L^2}
}{\tau_{\infty}}
\right)
\right)
\qquad
\text{(soft $\ell^\infty$ loss)},
\end{equation}
and a graph-distance-based loss defined via the soft approximation
\begin{align}
\max\Bigl\{
&
\sup_{i \in [N^{\rm train}]}
\inf_{j \in [N^{\rm train}]}
d_{ij},
\;
\sup_{j \in [N^{\rm train}]}
\inf_{i \in [N^{\rm train}]}
d_{ij}
\Bigr\}
\nonumber
\\
&
\qquad\approx\;
\max\{H,H'\}
=: d_{\mathrm{soft\text{-}gph}}
(A_k,\hat{A})
\qquad
\text{(soft graph loss)},
\label{eq:soft-graph-loss}
\end{align}
with weighted distance
\[
d_{ij}
=
\sqrt{
w_1 \|u_i-u_j\|_{L^2}^2
+
w_2 \|A_k(u_i)-\hat{A}(u_j)\|_{L^2}^2
}.
\]
The soft approximations are defined by
\[
H
=
\tau_{\rm gph,out}
\log
\sum_{i=1}^{N^{\rm train}}
\exp\!\left(
\frac{
-\tau_{\rm gph,in}
\log
\sum_{j=1}^{N^{\rm train}}
\exp(-d_{ij}/\tau_{\rm gph,in})
}{
\tau_{\rm gph,out}
}
\right),
\]
and
\[
H'
=
\tau_{\rm gph,out}
\log
\sum_{j=1}^{N^{\rm train}}
\exp\!\left(
\frac{
-\tau_{\rm gph,in}
\log
\sum_{i=1}^{N^{\rm train}}
\exp(-d_{ij}/\tau_{\rm gph,in})
}{
\tau_{\rm gph,out}
}
\right).
\]

For a fair comparison, all models use the same FNO architecture,
including the same number of modes, width, and depth.
All models are trained using Adam.

\medskip
In addition, we introduce a structured model inspired by the
resolvent formulation of maximally monotone operators
(Corollary~\ref{cor-main-2}).
Instead of directly parameterizing $\hat{A}$, we define
\begin{equation}
\label{eq:structured-graph}
\hat{S}(u)
=
u-\lambda \hat{R}(u),
\qquad
\hat{A}(u)
=
\frac{u-\hat{S}(u)}{\lambda},
\end{equation}
where $\hat{R}$ is represented by an FNO.
Note that this parametrization does not restrict the expressive power
of $\hat{A}$, since formally $\hat{A}(u)=\hat{R}(u)$.
However, it allows us to impose structural constraints on $\hat{S}$,
which are crucial for preserving maximal monotonicity.
To promote this structure, we add a penalty encouraging $\hat{S}$ to
be non-expansive:
\[
\begin{aligned}
\mathcal{L}_{\mathrm{nonexp}}
=
&
\sum_{(p,q)}
\left[
\max\left\{
0,\,
\frac{
\|\hat{S}(u_p)-\hat{S}(u_q)\|_{L^2}
}{
\|u_p-u_q\|_{L^2}
}
-1
\right\}^2
\right],
\\
&
\qquad\qquad
\text{where } \ 
u_p,u_q \sim (u_j)_{j=1}^{N^{\rm train}}
\quad
\text{with}
\quad
\#\{(u_p,u_q)\}=N^{\rm nonexp}.
\end{aligned}
\]
This term penalizes violations of the non-expansive condition
\[
\|\hat{S}(x)-\hat{S}(y)\|_{L^2}
\le
\|x-y\|_{L^2}.
\]

The structured model is trained using the loss
\[
\mathcal{L}
=
d_{\mathrm{soft\text{-}gph}}
(A,\hat{A})
+
\gamma \mathcal{L}_{\mathrm{nonexp}}.
\]

\medskip

In total, we train the following four models:
\begin{itemize}
\item FNO trained with $\ell^2$ loss,
\item FNO trained with soft $\ell^\infty$ loss,
\item FNO trained with soft graph loss,
\item Structured FNO trained with soft graph loss and non-expansive regularization.
\end{itemize}

We evaluate the trained models using the following metrics:
\begin{itemize}

    \item \textbf{Test MSE}
    \begin{equation}
    \label{eq:test-mse}
    \mathrm{Test\ MSE}
    :=
    \frac{1}{N^{\rm test}}
    \sum_{i=1}^{N^{\rm test}}
    \|\hat A(u_i)-A_k(u_i)\|_{L^2}^2.
    \end{equation}

    \item \textbf{Mean Rel-L2}
    \begin{equation}
    \label{eq:mean-rel-l2}
    \mathrm{Mean\ Rel\text{-}L2}
    :=
    \frac{1}{N^{\rm test}}
    \sum_{i=1}^{N^{\rm test}}
    \frac{
        \|\hat A(u_i)-A_k(u_i)\|_{L^2}
    }{
        \|A_k(u_i)\|_{L^2}
    }.
    \end{equation}

    \item \textbf{Worst Rel-L2}
    \begin{equation}
    \label{eq:worst-rel-l2}
    \mathrm{Worst\ Rel\text{-}L2}
    :=
    \max_{1\le i\le N^{\rm test}}
    \frac{
        \|\hat A(u_i)-A_k(u_i)\|_{L^2}
    }{
        \|A_k(u_i)\|_{L^2}
    }.
    \end{equation}

    \item \textbf{Test Graph}
    \begin{equation}
    \label{eq:test-graph}
    \mathrm{Test\ Graph}
    :=
    \max\Bigl\{
    \sup_{i \in [N^{\rm test}]}
    \inf_{j \in [N^{\rm test}]}
    d_{ij},\;
    \sup_{j \in [N^{\rm test}]}
    \inf_{i \in [N^{\rm test}]}
    d_{ij}
    \Bigr\}
    \end{equation}

    \item \textbf{Mono mean-viol}

    For randomly sampled pairs $(u_p,u_q)$, define
    \begin{equation}
    \label{eq:mono-inner}
    m_{pq}
    :=
    \bigl\langle
    u_p-u_q,
    \hat A(u_p)-\hat A(u_q)
    \bigr\rangle_H,
    \end{equation}
    where
    \[
    u_p,u_q \sim (u_j)_{j=1}^{N^{\rm train}},
    \qquad
    \#\{(u_p,u_q)\}=N^{\rm mono}.
    \]

    The mean monotonicity violation is defined by
    \begin{equation}
    \label{eq:mono-mean}
    \mathrm{Mono\ mean\text{-}viol}
    :=
    \frac{1}{N^{\rm mono}}
    \sum_{(p,q)}
    \max(-m_{pq},0).
    \end{equation}

    \item \textbf{Mono worst}

    The worst monotonicity violation is defined by
    \begin{equation}
    \label{eq:mono-worst}
    \mathrm{Mono\ worst}
    :=
    \max_{(p,q)}
    \max(-m_{pq},0).
    \end{equation}

    \item \textbf{Mono frac}

    The fraction of violating pairs is defined by
    \begin{equation}
    \label{eq:mono-frac}
    \mathrm{Mono\ frac}
    :=
    \frac{1}{N^{\rm mono}}
    \sum_{(p,q)}
    \mathbf{1}_{\{m_{pq}<0\}},
    \end{equation}
    where $\mathbf{1}$ denotes the indicator function.
\end{itemize}

Tables~\ref{tab:hyperparams-A1}
and~\ref{tab:hyperparams-plap}
summarize the choice of hyperparameters.

The results are summarized in Tables~\ref{tab:results-derivative} and ~\ref{tab:results-plap} where values are reported as mean $\pm$ standard deviation over four random seeds.

\begin{table}[t]
\centering
\caption{Hyperparameters for the experiment on \(A_1(u)=u'\).}
\label{tab:hyperparams-A1}
\begin{tabular}{ll}
\toprule
Parameter & Value \\
\midrule
Number of grid points for $[0,1]$ & \(128\) \\
Training samples \(N^{\rm train}\) & \(2000\) \\
Test samples \(N^{\rm test}\) & \(400\) \\
Batch size & \(64\) \\
Training frequency range & \(n_j\in\{2,\dots,30\}\) \\
Test frequency range & \(n_j\in\{2,\dots,30\}\) \\
Number of modes per sample \(K\) & \(K\sim \mathrm{Uniform}\{3,\dots,6\}\) \\
Frequency decay exponent \(\beta\) & \(0.5\) \\
\midrule
Epochs & \(80\) \\
Optimizer & Adam \\
Learning rate & \(2\times10^{-3}\) \\
Weight decay & \(10^{-6}\) \\
Gradient clipping threshold & \(1.0\) \\
\midrule
Soft \(\ell^\infty\) temperature \(\tau_\infty\) & \(0.02\) \\
Graph inner temperature \(\tau_{\rm gph,in}\) & \(0.01\) \\
Graph outer temperature \(\tau_{\rm gph,out}\) & \(0.01\) \\
Graph input weight \(w_1\) & \(0.5\) \\
Graph output weight \(w_2\) & \(1.0\) \\
\midrule
FNO modes & \(34\) \\
FNO width & \(100\) \\
FNO depth & \(3\) \\
\midrule
Structured model parameter \(\lambda\) & \(0.01\) \\
Non-expansive penalty weight \(\gamma\) & \(0.01\) \\
Number of non-expansive pairs \(N^{\rm nonexp}\) & \(512\) \\
Monotonicity evaluation pairs \(N^{\rm mono}\) & \(128^2\) \\
\bottomrule
\end{tabular}
\end{table}

\clearpage

\begin{table}[h]
\centering
\caption{
Hyperparameters for the experiment on 
\(
A_2(u)
=
-\operatorname{div}(|\nabla u|^{p-2}\nabla u)
\).
}
\label{tab:hyperparams-plap}
\begin{tabular}{ll}
\toprule
Parameter & Value \\
\midrule
Number of grid points for $[0,1]^2$ & \(32\times 32\) \\
Training samples \(N^{\rm train}\) & \(2000\) \\
Test samples \(N^{\rm test}\) & \(400\) \\
Batch size & \(32\) \\
Training frequency range &
\(
(k,l)\in\{2,\dots,6\}^2
\)
\\
Test frequency range &
\(
(k,l)\in\{9,\dots,16\}^2
\)
\\
Number of modes per sample \(K\) & \(K\sim \mathrm{Uniform}\{1,\dots,8\}\) \\
Frequency decay exponent \(\beta\) & \(0.0\) \\
\(p\)-Laplacian exponent \(p\) & \(1.2\) \\
\midrule
Epochs & \(100\) \\
Optimizer & Adam \\
Learning rate & \(3\times10^{-4}\) \\
Weight decay & \(10^{-6}\) \\
Gradient clipping threshold & \(0.3\) \\
\midrule
Soft \(\ell^\infty\) temperature \(\tau_\infty\) & \(0.1\) \\
Graph inner temperature \(\tau_{\rm gph,in}\) & \(10^{-4}\) \\
Graph outer temperature \(\tau_{\rm gph,out}\) & \(10^{-4}\) \\
Graph input weight \(w_1\) & \(10^{-4}\) \\
Graph output weight \(w_2\) & \(1.0\) \\
\midrule
FNO modes & \(16\) \\
FNO width & \(96\) \\
FNO depth & \(3\) \\
\midrule
Structured model parameter \(\lambda\) & \(7\times10^{-3}\) \\
Non-expansive penalty weight \(\gamma\) & \(2\times10^{-5}\) \\
Number of non-expansive pairs \(N^{\rm nonexp}\) & \(64\) \\
Monotonicity evaluation pairs \(N^{\rm mono}\) & \(5000\) \\
\bottomrule
\end{tabular}
\end{table}

\newpage

\begin{table}[h]
\centering
\footnotesize
\setlength{\tabcolsep}{3pt}

\caption{Comparison of test evaluations for \(A_1(u)=u'\).}
\label{tab:results-derivative}

\begin{tabular}{lccc}
\toprule
Model
& Test MSE.
& Mean Rel-$\ell^2$.
& Worst Rel-$\ell^2$.
\\
\midrule

$\ell^2$
& $1.1230 \pm 1.1532$
& $0.0159 \pm 0.0109$
& $0.0389 \pm 0.0314$
\\

$\ell^\infty$
& $2.3120 \pm 1.9431$
& $0.0263 \pm 0.0114$
& $0.0956 \pm 0.0532$
\\

graph
& $\mathbf{0.0421 \pm 0.0052}$
& $\mathbf{3.368\times10^{-3} \pm 1.9\times10^{-4}}$
& $\mathbf{0.0116 \pm 0.0029}$
\\

graph + structure
& $0.1784 \pm 0.0384$
& $7.934\times10^{-3} \pm 1.1\times10^{-3}$
& $0.0170 \pm 0.0047$
\\

\bottomrule
\end{tabular}

\vspace{2mm}

\begin{tabular}{lcccc}
\toprule
Model
& Test Graph.
& Mono mean-viol.
& Mono worst.
& Mono frac.
\\
\midrule

$\ell^2$
& $7.7351 \pm 10.7638$
& $20.5333 \pm 27.6013$
& $-246.1772 \pm 278.5229$
& $0.6375 \pm 0.2290$
\\

$\ell^\infty$
& $14.4008 \pm 17.1336$
& $43.5230 \pm 43.8994$
& $-493.7024 \pm 393.3084$
& $0.7730 \pm 0.1605$
\\

graph
& $\mathbf{0.2790 \pm 0.1225}$
& $1.8287 \pm 0.3608$
& $-33.9634 \pm 8.9236$
& $0.5284 \pm 0.0903$
\\

graph + structure
& $0.8975 \pm 0.3452$
& $\mathbf{0.2506 \pm 0.4498}$
& $\mathbf{-36.7055 \pm 32.1097}$
& $\mathbf{0.0111 \pm 0.0146}$
\\

\bottomrule
\end{tabular}
\end{table}

\begin{table}[h]
\centering
\footnotesize
\setlength{\tabcolsep}{3pt}

\caption{Comparison of test evaluations for \(
A_2(u)=
-\operatorname{div}(|\nabla u|^{p-2}\nabla u)
\).
}
\label{tab:results-plap}

\begin{tabular}{lccc}
\toprule
Model
& Test MSE.
& Mean Rel-$\ell^2$.
& Worst Rel-$\ell^2$.
\\
\midrule

$\ell^2$
& $1.93\times10^{4} \pm 3.0\times10^{3}$
& $6.13\times10^{-1} \pm 4.7\times10^{-2}$
& $6.88\times10^{-1} \pm 3.8\times10^{-2}$
\\

$\ell^\infty$
& $2.08\times10^{4} \pm 3.1\times10^{3}$
& $6.37\times10^{-1} \pm 4.8\times10^{-2}$
& $7.23\times10^{-1} \pm 6.5\times10^{-2}$
\\

graph
& $\mathbf{1.51\times10^{4} \pm 2.0\times10^{3}}$
& $\mathbf{5.43\times10^{-1} \pm 3.5\times10^{-2}}$
& $\mathbf{6.14\times10^{-1} \pm 4.1\times10^{-2}}$
\\

graph + structure
& $3.83\times10^{4} \pm 8.7\times10^{2}$
& $8.71\times10^{-1} \pm 8.9\times10^{-3}$
& $9.15\times10^{-1} \pm 1.4\times10^{-2}$
\\

\bottomrule
\end{tabular}

\vspace{2mm}

\begin{tabular}{lcccc}
\toprule
Model
& Test Graph.
& Mono mean-viol.
& Mono worst.
& Mono frac.
\\
\midrule

$\ell^2$
& $1.59\times10^{4} \pm 1.1\times10^{3}$
& $0.000 \pm 0.000$
& $0.000 \pm 0.000$
& $0.000 \pm 0.000$
\\

$\ell^\infty$
& $1.69\times10^{4} \pm 1.9\times10^{3}$
& $0.000 \pm 0.000$
& $0.000 \pm 0.000$
& $0.000 \pm 0.000$
\\

graph
& $\mathbf{1.28\times10^{4} \pm 1.6\times10^{3}}$
& $0.000 \pm 0.000$
& $0.000 \pm 0.000$
& $0.000 \pm 0.000$
\\

graph + structure
& $2.46\times10^{4} \pm 7.7\times10^{2}$
& $0.000 \pm 0.000$
& $0.000 \pm 0.000$
& $0.000 \pm 0.000$
\\

\bottomrule
\end{tabular}
\end{table}




\end{document}